% Elsevier Article Template Conversion from SIAM version
\documentclass[preprint,12pt]{elsarticle}

% Information that is shared between the article and the supplement
% (title and author information, macros, packages, etc.) goes into
% ex_shared.tex. If there is no supplement, this file can be included
% directly.

% Shared macros from original SIAM project
%\input{shared}

% Packages and macros from shared.tex
\usepackage{lipsum}
\usepackage{amsfonts}
\usepackage{algorithmic}
\usepackage{algorithm} % Added for algorithm environment
\usepackage{float} % Added for [H] float option
\usepackage{xcolor}
\usepackage{ifpdf} % Added for \ifpdf

\ifpdf
  \DeclareGraphicsExtensions{.eps,.pdf,.png,.jpg}
\else
  \DeclareGraphicsExtensions{.eps}
\fi

% Add a serial/Oxford comma by default.

% Used for creating new theorem and remark environments
\usepackage{amsthm}
\newtheorem{theorem}{Theorem}
\newtheorem{lemma}{Lemma}
\newtheorem{corollary}{Corollary}

\newtheorem{remark}{Remark}

\usepackage{amsopn}

%%%%% NEW MATH DEFINITIONS %%%%%

\usepackage{amsmath,amsfonts,bm}

% Mark sections of captions for referring to divisions of figures

% Highlight a newly defined term

% Figure reference, lower-case.

% Figure reference, capital. For start of sentence

% Section reference, lower-case.

% Section reference, capital.

% Reference to two sections.

% Reference to three sections.

% Reference to an equation, lower-case.
\def\eqref#1{equation~\ref{#1}}
% Reference to an equation, upper case

% A raw reference to an equation---avoid using if possible

% Reference to a chapter, lower-case.

% Reference to an equation, upper case.

% Reference to a range of chapters

% Reference to an algorithm, lower-case.

% Reference to an algorithm, upper case.

% Reference to a part, lower case

% Reference to a part, upper case

\def\1{\bm{1}}

% Random variables

% rm is already a command, just don't name any random variables m

% Random vectors

% Elements of random vectors

% Random matrices

% Elements of random matrices

% Vectors

\def\ve{{\bm{e}}}
\def\vf{{\bm{f}}}

\def\vk{{\bm{k}}}

\def\vr{{\bm{r}}}

\def\vu{{\bm{u}}}

\def\vx{{\bm{x}}}

% Elements of vectors

% Matrix

% Tensor
\DeclareMathAlphabet{\mathsfit}{\encodingdefault}{\sfdefault}{m}{sl}
\SetMathAlphabet{\mathsfit}{bold}{\encodingdefault}{\sfdefault}{bx}{n}

% Graph

% Sets

% Don't use a set called E, because this would be the same as our symbol
% for expectation.

% Entries of a matrix

% entries of a tensor
% Same font as tensor, without \bm wrapper

% The true underlying data generating distribution

% The empirical distribution defined by the training set

% The model distribution

% Stochastic autoencoder distributions

 % Laplace distribution

% Wolfram Mathworld says $L^2$ is for function spaces and $\ell^2$ is for vectors
% But then they seem to use $L^2$ for vectors throughout the site, and so does
% wikipedia.

 % See usage in notation.tex. Chosen to match Daphne's book.

% comments

\newtheorem{question}{Question}[section]
\usepackage{multirow}
\usepackage{booktabs}
\usepackage{graphicx,epstopdf}% figures
\usepackage[caption=false]{subfig}% subfigures
\graphicspath{{./}}
\usepackage[hidelinks]{hyperref}
\usepackage{cleveref}
\crefname{hypothesis}{Hypothesis}{Hypotheses}
\crefname{fact}{Fact}{Facts}

\makeatletter
\pdfstringdefDisableCommands{%
	\def\corref#1{}%
	\def\cortext#1#2{}%
	\def\tnoteref#1{}%
	\def\fnref#1{}%
	\def\ead#1{}%
	\def\@corref#1{}%
}
\makeatother

\usepackage{tikz}
\usetikzlibrary{shapes.geometric, arrows.meta, positioning, fit, calc, shadows}

% Metadata
\journal{Preprint}

% The next statement enables references to information in the
% supplement. See the xr-hyperref package for details.

%\externaldocument[][nocite]{supplement}

% FundRef data to be entered by SIAM
%<funding-group specific-use="FundRef">
%<award-group>
%<funding-source>
%<named-content content-type="funder-name"> 
%</named-content> 
%<named-content content-type="funder-identifier"> 
%</named-content>
%</funding-source>
%<award-id> </award-id>
%</award-group>
%</funding-group>

\begin{document}
\begin{frontmatter}

\title{Self-composing neural operators for high-frequency and multiscale PDE surrogates}
\author[1]{Juncai He}
\ead{jche@tsinghua.edu.cn}

\author[2,4]{Xinliang Liu\corref{cor1}}
\ead{xinliang.liu@ouc.edu.cn}
\ead{xinliang.liu.slai@gmail.com}

\author[3]{Jinchao Xu}
\ead{jinchao.xu@kaust.edu.sa}

\cortext[cor1]{Corresponding author}

\address[1]{Yau Mathematical Sciences Center, Tsinghua University, Haidian District, Beijing 100084, China}
\address[2]{Shenzhen Loop Area Institute}
\address[3]{King Abdullah University of Science and Technology, Thuwal, Saudi Arabia}
\address[4]{Ocean University of China, Qingdao, Shandong 266100, China}

% REQUIRED
\begin{abstract}
Addressing the computational challenges of high-frequency and multiscale partial differential equations (PDEs), this work introduces a self-composing neural operator (SC-NO) framework. Inspired by classical fixed-point iterative solvers (e.g., multigrid, domain decomposition), the proposed architecture constructs a deep operator by repeatedly applying a single, parameter-efficient backbone block. This design mimics the update step of a numerical solver, allowing the model to progressively resolve complex solution features without increasing the parameter count. For practical training, we develop an adaptive ``Train-and-Unroll'' strategy that grows the composition depth during training, acting as a curriculum from shallow to deep self-composed models. We demonstrate the efficacy of this framework on the Helmholtz equation for ultrasound computed tomography (USCT), a problem characterized by high-frequency wave propagation in highly heterogeneous media. By instantiating the backbone with a multigrid-inspired architecture, the SC-NO effectively mitigates the spectral bias often observed in standard operator learning baselines. Numerical experiments show that our method reduces the prediction error significantly compared to Fourier Neural Operators (FNO) and their variants in the 300--500 kHz regime. Furthermore, we provide theoretical analysis linking the self-composition depth to approximation accuracy.
\end{abstract}

\begin{keyword}
Operator learning \sep surrogate modeling \sep multigrid \sep Helmholtz equation \sep  adaptive training \sep computational efficiency \sep ultrasound computed tomography \sep MSC: 68Txx; 65Mxx
\end{keyword}

\end{frontmatter}

\section{Introduction}\label{sec:intro}
Partial differential equation (PDE) models are ubiquitous in physics, engineering, and other disciplines. Many scientific and engineering fields rely on solving PDEs, such as optimizing airfoil shapes for better airflow, predicting weather patterns by simulating the atmosphere, and testing the strength of structures in civil engineering. Tremendous efforts have been made to solve various PDEs arising from different areas. Solving PDEs usually requires designing numerical methods that are tailored to the specific problem and depend on the insights of the model. However, deep learning models have shown great promise in solving PDEs in a more general and efficient way.
Recently, several novel methods have been developed to directly learn the operator (mapping) between infinite-dimensional parameter and solution spaces of PDEs. Notable examples include DeepONet~\cite{lu2021learning}, which employs a branch-trunk architecture grounded in universal approximation theory~\cite{chen1995universal}, and the Fourier Neural Operator (FNO)~\cite{li2020fourier}, which parameterizes global convolutional operators using a fast Fourier transform. This line of research has expanded rapidly, yielding variants like geo-FNO~\cite{li2022fourier} for complex geometries, the Latent Spectral Model (LSM)~\cite{wu2023solving} that operates in a learned latent space, and Multiwavelet-based\break Transformers (MWT)~\cite{gupta2021multiwavelet} using wavelet transforms. Concurrently, attention mechanisms, popularized by Transformers~\cite{vaswani2017attention}, were adapted for operator learning, as seen in Galerkin-type attention models~\cite{cao2021choose} and the General Neural Operator Transformer (GNOT)~\cite{hao2023gnot}, which introduces heterogeneous normalized attention for handling multiple input functions and irregular meshes. More recently, explicitly convolution-based architectures have also been proposed as neural operators~\cite{raonic2023convolutional}, and scalable approaches like Transolver++~\cite{luo2025transolver} have demonstrated the ability to handle million-scale geometries for industrial applications. Additionally, specialized frameworks for complex geometries include Point Cloud Neural Operators (PCNO)~\cite{zeng2025point}, which efficiently handle variable domains with topological variations. MIONet is incorporated into classical iterative solvers, enhancing their efficiency \cite{jin2022mionet, hu2025hybrid}. The NINO~\cite{hao2024newton} has demonstrated how neural operators can learn to integrate traditional numerical techniques with Newton nonlinear solvers, effectively learning the nonlinear mapping at each iteration for multiple solutions.

These approaches can be broadly categorized. Spectral-type neural operators (e.g., FNO, MWT, LSM) leverage global operations and have shown promising results, supported by universal approximation theorems~\cite{kovachki2023neural,lanthaler2023nonlocal}. However, they can exhibit a spectral bias toward low-frequency components, while Fourier-mode truncation can further constrain high-frequency resolution~\cite{xu2025spectralbias}. MscaleFNO addresses this issue with parallel FNO branches and scaled inputs for oscillatory mappings and Helmholtz wave scattering~\cite{you2026mscalefno}. Local and geometrical neural networks (e.g., ResNet~\cite{he2016deep, he2016identity}, UNet~\cite{ronneberger2015u}) offer greater flexibility in managing diverse boundary conditions. Yet, these were not originally designed for operator learning of PDEs and often require a substantial number of parameters to achieve high accuracy. They can also act as high-pass filters, focusing on local features, which may limit their ability to capture long-distance dependencies crucial in many physical phenomena, unless specific mechanisms like large kernels are incorporated, which can be hard to train~\cite{ding2022scaling}. {A nearby but distinct class of PDE-informed learning methods embeds more of the governing equations into the learning procedure. In fluid mechanics, physics-constrained deep learning has been used to construct simulation-free surrogates, reconstruct flow fields from sparse and noisy observations, handle irregular geometries with geometry-adaptive CNNs, and improve flow-field resolution without high-resolution labels~\cite{sun2020surrogate,sun2020physics,gao2021phygeonet,gao2020super}. These works are relevant because they use physical information such as PDE residuals, boundary conditions, or discretized operators, but they are not direct amortized neural-operator baselines for the setting considered here. Related residual-driven methods integrate learned components directly into classical solver iterations, including scattering surrogates such as SwitchNet~\cite{khoo2019switchnet}, neural-augmented relaxation and preconditioning strategies~\cite{zhang2024blending,kopanicakova2025deeponet}, and multigrid-augmented Helmholtz preconditioners~\cite{azulay2023multigrid,lerer2024compact,cui2025neuralmg}. These methods are important references for our problem class, but they assume access to the discretized operator \cite{liu2024multi}, residual evaluation, or an iterative solver loop at inference time. In contrast, our goal is amortized operator surrogacy: after offline training, the model maps coefficient and source fields directly to the solution field without assembling the PDE operator or executing Krylov or multigrid iterations online. A detailed treatment of residual-driven neural preconditioners is therefore a separate solver-design topic rather than the focus of the present surrogate-learning manuscript.} Physics-aware operator architectures can instead encode structural constraints in the learned surrogate: the Peridynamic Neural Operator learns a nonlocal constitutive law within a state-based peridynamic formulation while preserving objectivity and balance laws~\cite{jafarzadeh2024peridynamic}. Other extensions target posterior inference or predictive uncertainty rather than only forward-surrogate accuracy. Gao et al. adaptively refine an operator surrogate during posterior evaluation for infinite-dimensional Bayesian inverse problems, while IB-UQ predicts both means and uncertainties for neural function regression and neural operator learning~\cite{gao2024adaptive,guo2024ibuq}. These objectives are complementary to, but are not evaluated in, the present work.  Applications in biomechanical engineering and weather forecasting \cite{you2022physics, pathak2022fourcastnet} further underscore the need for architectures that can combine the representational power of deep models with the computational parsimony of classical solvers.

Diffusion-augmented neural operators provide another route by improving spectral representation through a generative refinement stage, which is distinct from the deterministic forward-surrogate architecture considered here~\cite{oommen2025diffusion}.

Motivated by the iterative nature of many classical PDE solvers, where a conventional procedure is repeated to progressively refine a solution, we pose the following question:
\begin{question}
Can we design and efficiently train an appropriate neural operator such that it repeats only a simple skeleton structure (low-cost) but can achieve high accuracy through this repetition?
\end{question}
This question draws inspiration from some of the most powerful techniques in scientific computing. Among all iterative methods for numerical PDEs, multigrid methods~\cite{hackbusch2013multi,xu1989theory,trottenberg2000multigrid} are renowned for their efficiency, especially for elliptic PDEs. The connection between multigrid and deep learning was first noted in the original ResNet paper~\cite{he2016deep}, citing multigrid as a rationale for residual connections. Subsequently, MgNet~\cite{he2019mgnet,he2023interpretive} established deeper links, demonstrating that a linear V-cycle multigrid for the Poisson equation could be represented as a CNN. While MgNet and its adaptations have been explored for numerical PDEs~\cite{chen2022meta,zhu2023enhanced} and forecasting~\cite{zhu2023fv}, a general and effective framework for integrating multigrid principles directly into operator learning for a broader class of PDEs has remained an open area. {In this work, we aim to fundamentally integrate multigrid methodologies with operator learning based on a very general framework.}

To address these challenges, particularly those associated with high-frequency and multiscale PDEs, we investigate the intrinsic properties of operator learning tasks governed by PDEs and propose a concise and elegant neural operator architecture based on the concept of self-composition, directly inspired by iterative numerical methods. We introduce the general self-composing formulation $\mathcal O (\bm v) = \mathcal P \circ (\mathcal G \circ)^n \circ \mathcal L (\bm v)$ for neural operators. This framework allows building deep operators with significant parameter sharing by repeatedly applying a backbone operator $\mathcal G(\cdot)$.

The concept of repeatedly applying a computational block shares similarities with several existing paradigms. Classical iterative methods for PDEs, such as Jacobi or Gauss-Seidel, inherently involve repeating a fixed update rule. Recurrent Neural Networks (RNNs)~\cite{connor1994recurrent} also apply the same transition function repeatedly over a sequence. Algorithm unrolling~\cite{gregor2010learning} explicitly maps iterations of an optimization algorithm onto layers of a deep network.
Our self-composing framework is distinct in several ways. While motivated by iterative PDE solvers, it focuses on learning the solution operator itself. Unlike standard RNNs which process sequential data, our model typically takes static function data as input and is primarily concerned with the final output after $n$ compositions. Compared to algorithmic unrolling, we share a single backbone $\mathcal G$ across all compositions (tied parameters) rather than learning stage-specific parameters; compared to deep equilibrium models, we avoid solving an implicit fixed point and instead train at adaptively increasing explicit depths with the same $\mathcal G$ (Section~\ref{sec:train}, Fig.~\ref{fig:train_and_unroll_diagram}). This shared-$\mathcal G$ design promotes parameter efficiency, while adaptive depth growth improves computational efficiency and gradient stability during training.

Another perspective in deep learning involves viewing networks as discretized ordinary differential equations (ODEs)~\cite{e2017a,haber2018learning,lu2018beyond,chen2018neural}, where network depth corresponds to time steps. Pushing this to infinite depth leads to implicit neural network models like Deep Equilibrium Models (DEQs)~\cite{bai2019deep,geng2021training,fung2022jfb}, recently extended to implicit neural operators~\cite{marwah2023deep}. In these models, the output is found by solving for an equilibrium point. While elegant,  the accuracy cannot be readily improved by further self-composition in the explicit, iterative manner our framework allows.

In this work, we introduce a novel general self-composing formulation for neural operators, defined as $\mathcal{O}(\bm{v}) = \mathcal{P} \circ (\mathcal{G} \circ)^n \circ \mathcal{L}(\bm{v})$, directly inspired by iterative numerical methods commonly employed in PDE solvers, which is specifically designed to handle high-frequency and multiscale features effectively. This formulation facilitates the construction of deep neural operators through extensive parameter sharing by repeatedly applying a single backbone operator $\mathcal{G}(\cdot)$.
We provide robust theoretical justification for our self-composing structure, including qualitative universal approximation results (see Theorem~\ref{them:approxi}) and quantitative error reduction guarantees as the composition depth $n$ increases (see Theorem~\ref{them:rate}). Additionally, we introduce an efficient adaptive training technique, termed the Train-and-Unroll strategy, which progressively increases the composition depth $n$ throughout training, effectively leveraging previously learned weights from shallower network structures.
To empirically demonstrate the parameter efficiency and effectiveness of our proposed self-composing structure, we apply it to benchmark Darcy flow problems, where our model exhibits significant advantages. Moreover, we further specialize our framework by designing an MgNet-inspired backbone augmented with an Adaptive Convolution Mechanism tailored specifically for the challenging Helmholtz equation arising in ultrasound computed tomography (USCT). Quantitatively, our approach attains similar Darcy errors with substantially fewer parameters (Table~\ref{tab:darcy}), and on USCT reduces RRMSE by 9–13× versus strong baselines at 300–500 kHz (Table~\ref{tab:forward-baselines}).

The remainder of this paper is structured as follows. Section~\ref{sec:IntroNO} details the architecture of the self-composing neural operators, providing both qualitative and quantitative theoretical justifications alongside numerical experiments. Section~\ref{sec:train} introduces the Train-and-Unroll strategy, our dynamic depth training algorithm for these operators. In Section~\ref{sec:backbone}, we describe a specific MgNet-inspired backbone designed for the Helmholtz equation and present numerical results that demonstrate the efficiency and robustness of the proposed method. Finally, concluding remarks and future directions are discussed in Section~\ref{sec:conclusions}.

\section*{Notation summary}
We summarize key symbols used in the manuscript.
\begin{center}
\begin{tabular}{ll}
\toprule
Symbol & Meaning \\
\midrule
$n$ & Composition depth (number of self-compositions) \\
$L$ & Backbone depth (hidden layers within $\mathcal G$) \\
$N$ & Constant width across layers when $N_\ell\equiv N$ \\
$N_\ell$ & Width of the $\ell$-th hidden layer \\
$\widetilde N$ & Latent width for lift/project spaces $\mathcal Z^{\widetilde N}$ \\
$\mathcal L,\mathcal P$ & Lift and projection operators \\
$\mathcal G$ & Shared backbone operator (tied across compositions) \\
$\mathcal O$ & Self-composed operator $\mathcal P\circ(\mathcal G\circ)^n\circ\mathcal L$ \\
\bottomrule
\end{tabular}
\end{center}

\section{Self-composing neural operators and theoretical analysis}\label{sec:IntroNO}
In this section, we first introduce an abstract deep (fully connected) neural operator framework for mapping between (vector-valued) Hilbert function spaces, which can be generalized to Banach spaces. Then, we propose the self-composing neural operator architecture. Furthermore, we prove the universal approximation theory for this novel neural operator with some further theoretical and practical remarks and properties. 

\subsection{Abstract deep neural operators}
Let $\mathcal Z = H^s(\Omega)$ denote a Hilbert space over a bounded domain $\Omega \subset \mathbb R^d$ with the notation for the product space 
$$
\mathcal Z^n := \underbrace{\mathcal Z \otimes \mathcal Z \otimes \cdots \otimes \mathcal Z}_{n}.
$$
Following the notation in~\cite{lanthaler2023nonlocal,he2023mgno}, we introduce the abstract deep neural operator framework $\mathcal O: \mathcal Z^{\tilde d} \mapsto \mathcal Z^{\Bar d}$ with a fully connected structure regarding neurons. To begin, we denote the bounded linear operator acting between these product spaces as $\mathcal W \in \mathcal L\left(\mathcal Z^n, \mathcal Z^m\right)$. 
To be more specific, the relation $\left[ \mathcal W \bm h \right]_i = \sum_{j=1}^n \mathcal W_{ij} \bm h_j$ holds for $ i=1:m$, 
where $\bm h_j \in \mathcal Z$ and $\mathcal W_{ij} \in \mathcal L\left(\mathcal Z, \mathcal Z\right)$.
Now, let us introduce $N_\ell \in \mathbb N^{+}$ for all $\ell=1:L$ as the number of neurons in the $\ell$-th hidden layer with $N_{0} = \tilde d$ and $N_{L+1} = \Bar d$.
Then, the deep neural operator $\mathcal O(\bm v)$ with $L$ hidden layers and $N_\ell$ neurons in the $\ell$-th layer is defined as
\begin{equation}\label{eq:dnodef}
    \begin{cases}
        &\bm h^0(\bm v) = \bm v \in \mathcal Z^{N_0} \\
        &\bm h^{\ell}(\bm v) = \sigma\left(\mathcal W^\ell \bm h^{\ell-1}(\bm v) + \mathcal B^\ell \right) \in \mathcal Z^{N_\ell}  \quad \ell=1:L \\
        &\mathcal O(\bm v) = \mathcal W^{L+1} \bm h^L(\bm v) + \mathcal B^{L+1} \in \mathcal Z^{N_{L+1}}
    \end{cases}
\end{equation}
where $\mathcal W^{\ell} \in \mathcal L\left(\mathcal Z^{N_{\ell-1}}, \mathcal Z^{N_{\ell}}\right)$ with $\mathcal B^\ell \in \mathcal Z^{N_\ell}$
and $\sigma: \mathbb R \mapsto \mathbb R$ defines the nonlinear point-wise activation. Unless otherwise specified, we reserve $n$ for composition depth and use $N$ for width; for constant width we set $N_\ell = N$ for all $\ell=1:L$.

As shown in \cite{he2023mgno}, although the above fully connected architecture is quite straightforward, it provides a very general framework for most existing neural operators. Basically, they mainly focus on how to parameterize the bounded linear operator $[\mathcal W^\ell]_{ij}$, for instance DeepONet~\cite{lu2021learning}, Fourier NO~\cite{li2020fourier}, Low-rank NO~\cite{kovachki2023neural}, Wavelet NO~\cite{gupta2021multiwavelet}, Riemannian/Laplacian NO~\cite{chen2023learning}, Nonlocal NO~\cite{lanthaler2023nonlocal}, MgNO~\cite{he2023mgno}, etc.. 
In our work, we also propose a new parameterization mechanism with a natural motivation which will be introduced in the next section. Different from others, we also propose a new key architecture for building efficient and accurate deep neural operators motivated by iterative methods in solving numerical PDEs.

\subsection{Self-composing neural operators motivated by iterative methods in solving PDEs}
Typically, a standard PDE problem can be formulated as follows: Given a coefficient or parameter function $\bm{v} \in \mathcal{X}^{d_{\rm in}}$, find the solution $\bm{u} \in \mathcal{Y}^{d_{\rm out}}$ such that
\begin{equation}
\mathcal{D}(\bm{u}; \bm{v}) = 0,
\end{equation}
where $\mathcal{D}$ denotes the specific PDE operator involving derivatives and boundary information. Here, $\mathcal{X}$ and $\mathcal{Y}$ are Sobolev spaces defined on a bounded domain $\Omega \subset \mathbb{R}^d$, and $d_{\rm in}$ and $d_{\rm out}$ account for the diversity of different types of PDE problems. The operator learning task, therefore, is to approximate the operator $\mathcal{O}^*: \mathcal{X}^{d_{\rm in}} \to \mathcal{Y}^{d_{\rm out}}$.
In solving PDEs (both theoretically or numerically), we typically apply some iterative methods with certain initialization to approximate the solution as
\begin{equation}
    \bm u^{n} = \mathcal I(\bm u^{n-1};\bm v).
\end{equation}
To express this iterative process as a self-composition, we define an augmented operator $\widetilde{\mathcal I}:\mathcal Y^{d_{\rm out}} \times \mathcal X^{d_{\rm in}} \mapsto \mathcal Y^{d_{\rm out}} \times \mathcal X^{d_{\rm in}}$ whose action on the augmented state $(\bm u^{n-1}, \bm v)$ is given by
\begin{equation}
    \widetilde{\mathcal I} \begin{pmatrix} \bm u^{n-1} \\ \bm v \end{pmatrix} := \begin{pmatrix} \mathcal I(\bm u^{n-1}; \bm v) \\ \bm v \end{pmatrix} = \begin{pmatrix} \bm u^n \\ \bm v \end{pmatrix}.
\end{equation}
By applying this operator $n$ times to the initial state $(\bm u^0, \bm v)$, we obtain the state at the $n$-th iteration:
\begin{equation}
    \begin{pmatrix}
        \bm u^n \\ \bm v
    \end{pmatrix} =  (\widetilde{\mathcal I} \circ)^n \begin{pmatrix}
        \bm u^0 \\ \bm v
    \end{pmatrix},
\end{equation}
where $(\widetilde{\mathcal I} \circ)^n$ denotes the self-composition of $\widetilde{\mathcal I}$ for $n$ times.  
If the iterative method $\mathcal I$ is strictly monotonically convergent, we have $e_n < e_m$ for any $m < n$ where
\begin{equation}
    e_n  := \left\|
    \begin{pmatrix}
        \bm u \\ \bm v
    \end{pmatrix} - (\widetilde{\mathcal I} \circ)^n \begin{pmatrix}
        \bm u^0 \\ \bm v
    \end{pmatrix} \right\|_{\mathcal Y^{d_{\rm out}} \times \mathcal X^{d_{\rm in}}} = \left\|\bm u - \bm u^n\right\|_{\mathcal Y^{d_{\rm out}}}
\end{equation}
Furthermore, for any linearly convergent  iterative method $\mathcal I$, there exists $0 < \delta < 1$ such that $e_n \le \delta^n e_0$. Motivated by this simple structure of self-composition and dynamic error decaying property, we propose the following fully connected deep neural operators with self-composing structure
\begin{equation}\label{eq:scnodef}
    \mathcal O (\bm v) = \mathcal P \circ (\mathcal G \circ)^n \circ \mathcal L (\bm v)
\end{equation}
where $\mathcal L: \mathcal X^{d_{\rm in}} \to \mathcal Z^{\widetilde N}$ and $\mathcal P: \mathcal Z^{\widetilde N} \to \mathcal Y^{d_{\rm out}}$ are two bounded linear operators for some $\widetilde N \in \mathbb N^+$ and $\mathcal G: \mathcal Z^{\widetilde N} \to \mathcal Z^{\widetilde N}$ defines a vanilla deep fully connected neural operator with a fixed depth $L\in\mathbb N^+$, width $N\in\mathbb N^+$ and activation $\sigma(\cdot)$ as in \eqref{eq:dnodef} (with $n_0 = n_{L+1} = \widetilde N$ because of the existing $\mathcal P$ and $\mathcal L$). The components of this architecture parallel the augmented iterative method: the lifting operator $\mathcal{L}$ creates an initial latent state from the input $\bm{v}$, analogous to creating the initial augmented state $(\bm{u}^0, \bm{v})$. The backbone operator $\mathcal{G}$ acts as the neural analogue of the iterative update operator $\widetilde{\mathcal{I}}$, and its repeated self-composition for $n$ times mimics the iterative process. Finally, the projection operator $\mathcal{P}$ extracts the final solution from the latent space, similar to retrieving $\bm{u}^n$ from the final augmented state.

If $n=1$, the neural operator defined in \eqref{eq:scnodef} degenerates to the standard deep fully connected neural operator as in \eqref{eq:dnodef}. When $n\ge2$, this self-composing neural operator architecture is quite different from the standard one. 
If the depth of $\mathcal G$ is $L$, then one can also interpret $\mathcal P \circ (\mathcal G \circ)^n \circ \mathcal L $ as a fully connected neural operator with $n\times L$ hidden layers but sharing parameters along all $n$ blocks where each block consists of a deep neural operator with $L$ hidden layers. 
That is, the self-composing neural operator can efficiently save the memory cost for building a very deep neural operator. 
Given this special structure and efficiency in memory, we surprisingly still have the following universal approximation theory for self-composing neural operators.

\subsection{Theoretical justification for self-composing neural operators}
For simplicity, let us assume $d_{\rm in} = d_{\rm out} = 1$ and $\mathcal O^*: \mathcal C \subset \mathcal X \to \mathcal Y$ is a continuous operator on a compact domain $\mathcal C$. 
We first present the following universal approximation result for self-composing neural operators with any fixed $n$.
\begin{theorem}\label{them:approxi}
For any $n\in \mathbb N^+$ and $\epsilon > 0 $ , there exist $N, \widetilde N, L \in \mathbb N^+$ with two bounded linear operators
$\mathcal L: \mathcal X \to \mathcal Z^{\widetilde N}$ and $\mathcal P: \mathcal Z^{\widetilde N} \to \mathcal Y$ and a neural operator $\mathcal G: \mathcal Z^{\widetilde N} \to \mathcal Z^{\widetilde N}$ with ReLU activation function, $L$ hidden layers and $N$ neurons in each layer such that
\begin{equation}
    \sup_{\bm v \in \mathcal C}\left\| \mathcal O^*(\bm v) - \mathcal P \circ (\mathcal G \circ)^n \circ \mathcal L (\bm v)\right\|_{\mathcal X} \le \epsilon.
\end{equation}
Here, $\mathcal Z$ can be any function spaces that contain the constant function $\bm 1(x)$ on $\Omega$.
\end{theorem}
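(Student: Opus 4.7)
The plan is to reduce the statement to a standard universal approximation theorem for ReLU deep neural operators (applied to the $n=1$ case of the self-composing formula \eqref{eq:scnodef}) and then construct a backbone $\mathcal{G}$ whose self-composition stabilizes after a single application. In other words, I would absorb the iteration count $n$ into an enlarged latent space on which $\mathcal{G}$ is idempotent after one step, so that the value of $n \ge 1$ becomes irrelevant.

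\textbf{Step 1: invoke single-block approximation.} For the given continuous operator $\mathcal{O}^*$ on the compact set $\mathcal{C}$ and the tolerance $\epsilon > 0$, the universal approximation theorems for deep neural operators (see e.g.\ the results in \cite{lanthaler2023nonlocal, he2023mgno}) supply integers $\widetilde N_0, L_0, N_0$, bounded linear operators $\mathcal{L}_0 : \mathcal{X} \to \mathcal{Z}^{\widetilde N_0}$ and $\mathcal{P}_0 : \mathcal{Z}^{\widetilde N_0} \to \mathcal{Y}$, and a ReLU fully connected neural operator $\mathcal{G}_0 : \mathcal{Z}^{\widetilde N_0} \to \mathcal{Z}^{\widetilde N_0}$ of depth $L_0$ and width $N_0$ in the sense of \eqref{eq:dnodef}, such that
\[
\sup_{\bm v \in \mathcal{C}} \bigl\| \mathcal{O}^*(\bm v) - \mathcal{P}_0 \circ \mathcal{G}_0 \circ \mathcal{L}_0(\bm v) \bigr\|_{\mathcal{Y}} \le \epsilon.
\]

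\textbf{Step 2: stabilize via augmentation.} I would take $\widetilde N = 2\widetilde N_0$, view the latent space as $\mathcal{Z}^{\widetilde N_0} \times \mathcal{Z}^{\widetilde N_0}$, and design $\mathcal{G}$ so that
\[
\mathcal{G}(\bm h, \bm u) = \bigl(\bm h,\, \mathcal{G}_0(\bm h)\bigr) \quad \text{for all } (\bm h, \bm u) \in \mathcal{Z}^{\widetilde N_0} \times \mathcal{Z}^{\widetilde N_0}.
\]
This map is idempotent: a second application leaves the first slot untouched and recomputes $\mathcal{G}_0(\bm h)$ in the second, yielding the same state. Setting $\mathcal{L}(\bm v) = (\mathcal{L}_0(\bm v), \bm 0)$ and $\mathcal{P}(\bm h, \bm u) = \mathcal{P}_0(\bm u)$, both bounded linear, we get
\[
\mathcal{P} \circ (\mathcal{G}\circ)^n \circ \mathcal{L}(\bm v) = \mathcal{P}_0 \circ \mathcal{G}_0 \circ \mathcal{L}_0(\bm v) \quad \text{for every } n \ge 1,
\]
so the $\epsilon$-bound from Step~1 transfers verbatim to the self-composed operator.

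\textbf{Step 3 (the main obstacle): realize $\mathcal{G}$ as a uniform-width, depth-$L$ ReLU operator.} The conceptual argument is clean; the technical work is to certify that the augmented map in Step~2 genuinely fits the architecture \eqref{eq:dnodef} with a single common width $N$ and depth $L$ across all hidden layers. Discarding the $\bm u$-slot is just a bounded linear projection built into the first layer. The $\bm h$-slot must be propagated unchanged through $L_0$ layers while $\mathcal{G}_0(\bm h)$ is simultaneously computed in parallel; I would handle this by the ReLU identity trick $x = \sigma(x) - \sigma(-x)$, repackaging the skip stream as $(\sigma(\bm h), \sigma(-\bm h))$ at every layer so that a single affine combination restores $\bm h$ for the next layer. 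The hypothesis that $\mathcal{Z}$ contains the constant function $\bm 1(x)$ is used here to realize the requisite affine shifts through the bias terms $\mathcal{B}^\ell$. Choosing $L = L_0$ and $N$ large enough to pad both parallel streams (for instance $N = 2\widetilde N_0 + N_0$) produces a ReLU self-composing operator of the required form, and combining with Step~2 completes the proof.
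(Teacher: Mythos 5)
Your proof is correct but takes a genuinely different route from the paper's. The paper reduces $\mathcal{O}^*$ to a finite sum $\sum_i f_i(\cdot)\phi_i$, discretizes the input by projecting onto finitely many coefficients, invokes a result of Zhang et al.\ (Theorem~1.3 in \cite{zhang2023enhancing}) that constructs genuinely self-composing ReLU DNNs of the form $P_i\circ(g_i\circ)^{n}\circ L_i$ on a cube, and then lifts this finite-dimensional construction to the operator level through the constant function $\bm 1(x)$. You instead side-step the self-composition altogether: invoke a one-block universal approximation theorem and then augment the latent space so that $\mathcal G$ is idempotent ($\mathcal G\circ\mathcal G=\mathcal G$), making the iterate count $n$ vacuous. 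Your route is shorter and more elementary (it avoids the Zhang et al.\ result entirely), and it even yields a slightly stronger conclusion in that a single $\mathcal G$ works for every $n\ge 1$ simultaneously. What the paper's heavier machinery buys is that each application of $\mathcal G$ does nontrivial approximation work, which is precisely what is needed for the quantitative companion Theorem~\ref{them:rate}, where the error must decrease as $n$ grows; an idempotent $\mathcal G$ is by design incapable of improving with $n$, so your construction cannot extend to that theorem. One small caution for your Step~3: the form \eqref{eq:dnodef} specifies an affine (activation-free) output layer for $\mathcal G$, so the reconstruction $\sigma(\bm h)-\sigma(-\bm h)=\bm h$ in the final layer is licit; had $\mathcal G$ instead ended with an outer $\sigma$ (as the $\mathcal G_i$ written inside the paper's proof appears to), the preserved slot would be clamped nonnegative after one application of $\mathcal G$, and you would need to carry the rectified pair $(\sigma(\bm h),\sigma(-\bm h))$ through the iterations rather than reconstituting $\bm h$ at each output.
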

\begin{remark}
Assume throughout that $\mathcal C\subset \mathcal X$ is compact. The theorem asserts that for any fixed composition depth $n$, there exist finite width $N$, latent width $\widetilde N$, and backbone depth $L$ such that the tied-parameter self-composition approximates $\mathcal O^*$ uniformly on $\mathcal C$. Practically, one can fix parameters of $\mathcal G$ and still control accuracy by adjusting the composition depth $n$.
\end{remark}
Before we prove the main theorem, we present the next important lemma about the approximation of a family of high-dimensional functions by the self-composition structure of deep ReLU neural networks.
\begin{lemma}\label{lem:NNAprox}
	For any $\epsilon > 0$, $n \in \mathbb N^+$, and a sequence of constant numbers $\left\{c_i\right\}_{i=1}^m$ and functions $\left\{F_i(x)\right\}_{i=1}^m$ on $[0,1]^k$, there exists a sequence of ReLU DNNs $\left\{\widetilde F_i(x)\right\}$ such that
	\begin{equation*}
		\left\|F_i - \widetilde F_i\right\|_{L^{\infty}([0,1]^k)} \le \frac{\epsilon}{3mc_i}
	\end{equation*}
	for each $i=1:m$ with 
	$$
	\widetilde F_i(x) = P_i \circ (g_i \circ)^{n} \circ L_i (x)
	$$
	where $L_i: \mathbb R^k \mapsto \mathbb R^{\widetilde k}$ and $P_i: \mathbb R^{\widetilde k} \mapsto \mathbb R$ are affine mappings and $g_i: \mathbb R^{\widetilde k} \mapsto\mathbb R^{\widetilde k} $ is a deep ReLU neural network function with $L$ hidden layers and $\overline k_i$ neurons in each layer.
\end{lemma}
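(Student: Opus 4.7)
My plan is to reduce the statement to the classical universal approximation theorem for ReLU networks and then realize the resulting vanilla approximant as a self-composition by a simple accumulator trick. First, for each index $i$, I would apply the standard universal approximation theorem for deep ReLU networks on $[0,1]^k$ to produce a vanilla ReLU DNN $\widehat F_i:\mathbb R^k\to\mathbb R$ with $\|F_i-\widehat F_i\|_{L^\infty([0,1]^k)}\le\epsilon/(3mc_i)$. Since the lemma requires a single common depth $L$ across all $i$ (but permits the width $\overline k_i$ to vary with $i$), I would pad each $\widehat F_i$ with identity ReLU blocks until all of them have a common depth equal to $L=\max_i L_i'$, noting that this padding leaves the realized function unchanged.

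Next I would construct the self-composed representation. Set $\widetilde k := k+1$ and work with an augmented state $(x,y)\in\mathbb R^{k+1}$; take $L_i(x)=(x,0)$ and $P_i(x,y)=y$ as the affine lifting and projection. The backbone $g_i$ is designed to implement the single-step update $(x,y)\mapsto(x,\,y+\tfrac{1}{n}\widehat F_i(x))$. A telescoping induction then yields $g_i^{\,j}\circ L_i(x)=\bigl(x,\tfrac{j}{n}\widehat F_i(x)\bigr)$, so after exactly $n$ compositions we recover the state $\bigl(x,\widehat F_i(x)\bigr)$ and hence $\widetilde F_i(x)=P_i\circ(g_i\circ)^n\circ L_i(x)=\widehat F_i(x)$; the required error estimate then transfers directly from the first step.

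The real technical work, and the part I expect to be fiddly, is realizing $g_i$ honestly as a depth-$L$ ReLU DNN of the form used in \eqref{eq:dnodef}. The core gadget is the pointwise identity $t=\sigma(t)-\sigma(-t)$, which lets me route each coordinate of $(x,y)$ through every hidden layer along a dedicated \emph{identity channel} without distortion. In parallel, a second \emph{computation channel} of neurons executes the depth-$L$ evaluation of $\tfrac{1}{n}\widehat F_i(x)$, drawing its input from the preserved copy of $x$ at each layer. The final affine output map of $g_i$ assembles the outgoing vector $\bigl(x,\,y+\tfrac{1}{n}\widehat F_i(x)\bigr)\in\mathbb R^{\widetilde k}$, matching the input dimension so that $g_i\circ g_i$ is literally defined. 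The width $\overline k_i$ is then the sum of the identity channel's width (constant, depending only on $k$) and the computation channel's width (determined by $\widehat F_i$).

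The hardest part will be the bookkeeping of dimensions in this construction, rather than any subtle approximation-theoretic argument: one has to verify that the identity and computation channels coexist at every hidden layer, that the final affine map of $g_i$ returns to $\mathbb R^{\widetilde k}$ so that self-composition is well-defined, and that the uniform depth $L$ chosen after padding suffices simultaneously for all $i$. No non-classical tool beyond the standard ReLU universal approximation theorem is required, since once the identity-preservation gadget is in place, the self-composed structure reduces to a shared-parameter accumulator whose output after $n$ steps equals the vanilla approximant $\widehat F_i$.
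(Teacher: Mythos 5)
Your proof is correct, and it takes a genuinely different — more elementary — route than the paper's. The paper invokes Theorem~1.3 of Zhang et al.\ (2023), which already produces self-composing ReLU approximations $P \circ (\hat g \circ)^{\hat n} \circ L$ with an explicit rate $6\sqrt{k}\,\omega_F(\hat n^{-1/k})$ using a backbone of fixed width $4^{k+5}k$ and depth $3+2k$ independent of $F$ and $\hat n$; the remaining work in the paper is purely combinatorial: pick $\overline n = qn \ge \max_i n_i$, invoke Zhang's theorem at composition count $\overline n$, and re-group the inner compositions into blocks of $q$ by setting $g_i := (\overline g_i\circ)^q$ so the outer count is exactly $n$. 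You instead reduce to the vanilla universal approximation theorem and then package the one-shot approximant $\widehat F_i$ into a depth-$n$ self-composition via the accumulator $(x,y)\mapsto\bigl(x,\,y+\tfrac{1}{n}\widehat F_i(x)\bigr)$, with identity channels $t=\sigma(t)-\sigma(-t)$ carrying state through the shared block. Both arguments establish Lemma~\ref{lem:NNAprox} as stated, since the lemma only asserts existence for the given, fixed $n$ and allows width and depth to depend on the $F_i$. What the paper's heavier tool buys — and what your accumulator cannot provide — is a quantitative gain from $n$: in your construction each self-composition redundantly re-evaluates the same $\widehat F_i$, so the error is entirely independent of $n$, whereas Zhang's theorem gives an error that shrinks as $n$ grows with the backbone size held fixed; this is precisely what the paper later reuses to prove the $O(1/\log n)$ bound in Theorem~\ref{them:rate}. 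Your argument is therefore perfectly adequate for the qualitative Theorem~\ref{them:approxi}, but it would not transport to Theorem~\ref{them:rate}.
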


\begin{proof}
	Given the approximation result in Theorem 1.3 in \cite{zhang2023enhancing}, for any $F_i(x)$ and $n_i \in \mathbb N^+$ with $i=1:m$, there exist ReLU DNNs $\hat F_i(x)$ such that
	\begin{equation}    
		\|F_i - \hat F_i\|_{L^\infty([0,1]^k)} \le 6 \sqrt{k} \omega_{F_i} \left(n_i^{-1/k}\right) \le \frac{\epsilon}{3mc_i},
	\end{equation}
	with $\hat F_i(x) =  P_i \circ (\hat g_i\circ)^{n_i} \circ  L_i (x)$
	where $ P_i: \mathbb R^{\hat k} \mapsto \mathbb R$ and $ L_i (x): \mathbb R^k \mapsto \mathbb R^{\hat k}$ are linear maps with $\hat k = 3^k(5k+4)-1$ and $\hat g_i: \mathbb R^{\hat k} \mapsto \mathbb R^{\hat k}$ are ReLU DNNs with $4^{k+5}k$ neurons and $3+2k$ hidden layers.
	Here, $\omega_{f}(r) := \sup_{\|x-y\|\le r}|f(x) - f(y)|$ denotes the modulus of continuous function $f(x)$ on $[0,1]^k$.
	
	Thus, for all $\left\{\frac{\epsilon}{3mc_i}\right\}_{i=1}^m$, one can take $\overline n = qn \ge \max_{i}\left\{n_i\right\}$ for some $q \in \mathbb N^+$.
	Consequently, calling Theorem 1.3 in \cite{zhang2023enhancing} again, we have ReLU DNNs $\overline F_i$ such that 
	\begin{equation}
		\|F_i - \overline F_i\|_{L^\infty([0,1]^k)} \le 6 \sqrt{k} \omega_{F_i} \left(\overline n_i^{-1/k}\right)  \le 6 \sqrt{k} \omega_{F_i} \left( n_i^{-1/k}\right) 
		\le \frac{\epsilon}{3mc_i}
	\end{equation}
	for all $i=1:m$ with $\overline F_i(x) =  \overline P_i \circ (\overline g_i\circ)^{n_i} \overline  L_i (x)$
	where $\overline P_i: \mathbb R^{\overline k} \mapsto \mathbb R$ and $\overline L_i (x): \mathbb R^k \mapsto \mathbb R^{\overline k}$ are linear maps with $\overline k = \hat k = 3^k(5k+4)-1$ and $\overline g_i: \mathbb R^{\hat k} \mapsto \mathbb R^{\hat k}$ are ReLU DNNs with $4^{k+5}k$ neurons and $3+2k$ hidden layers.
	
	Finally, we can finish the proof by taking $g_i = (\overline g_i\circ)^{q}$, 
	$P_i = \overline P_i$, and $L_i = \overline L_i$ for all $i=1:m$.
\end{proof}

With Lemma~\ref{lem:NNAprox} on hand, we can present the proof for Theorem~\ref{them:approxi}. 
% \subsection*{Proof of Theorem~\ref{them:approxi}}
\begin{proof}
	If $n=1$,  it degenerates to the classical approximation theorem of neural operators with only one hidden layer or a general deep neural operator architecture, which can be found in ~\cite{lanthaler2023nonlocal,he2023mgno}.
	
	Here, we are more interested in the case $n\ge 2$ since this requires a repeated composition of a fixed ReLU neural network. We split the proof to the following steps.
	
	\paragraph{Finite-dimension approximation of $\mathcal O^*$ by projection:} 
	Since $\mathcal C \subset \mathcal X$ is a compact set and $\mathcal O^*$ is continuous, we have the $\mathcal O^*(\mathcal C)$ is also compact in $\mathcal Y$. Thus, for any $\epsilon > 0$, there are unit orthogonal basis function $\{\phi_1, \cdots, \phi_m \} \subset \mathcal Y$ and continuous functionals $f_i: \mathcal X \mapsto \mathbb R$ for $i=1:m$ such that
	\begin{equation}
		\sup_{\bm v\in \mathcal C} \left\| \mathcal O^*(\bm v) - \sum_{i=1}^m f_i(\bm v) \phi_i \right\|_{\mathcal Y} \le \frac{\epsilon}{3}.
	\end{equation}
	Thus, we only need to prove that there is deep neural operator $\mathcal O_i = \mathcal P_i \circ (\mathcal G_i\circ)^{n} \circ \mathcal L_i$ such that
	\begin{equation}
		\sup_{\bm v\in \mathcal C} \left\| f_i(\bm v) \phi_i - \mathcal O_i(\bm v)\right\|_{\mathcal Y} \le \frac{2\epsilon}{3m}.
	\end{equation}
	
	\paragraph{Parameterization (approximation) of $\mathcal X$ with finite dimensions to discretize $f_i$:}
	Since $\mathcal X = H^{s}(\Omega)$ and $\mathcal C$ is compact, we can find $k \in \mathbb N^+$ such that 
	\begin{equation}
		\sup_{\bm v \in \mathcal C}\left\|f_i(\bm v)\phi_i - f_i\left(\sum_{j=1}^k   \left(\bm v,\varphi_j\right)\varphi_j\right) \phi_i \right\|_{\mathcal Y} \le \frac{\epsilon}{3m}  \quad \forall i=1:m,
	\end{equation}
	where $\varphi_i$ are the $L^2$ orthogonal basis in Sobolev space $\mathcal X$.
	Then, for a specific $f_i: \mathcal X \mapsto \mathbb R$, let us define the following finite-dimensional continuous function $F_i: \mathbb R^k \mapsto \mathbb R$ as
	\begin{equation}
		F_i(x) = f_i\left(\sum_{j=1}^k x_j \varphi_j \right), \quad \forall x \in [-M,M]^{k},
	\end{equation}
where $M := \sup_{i}\sup_{v \in \mathcal C} (\bm v,\varphi_i) < \infty$ because of the compactness of $\mathcal C$.
	
	\paragraph{Universal approximation of $F_i$ on compact domain using Lemma~\ref{lem:NNAprox}:}
	For any $n\ge 2$, by using the scaling and shifting transformation to map $[-1,1]^k$ to $[0,1]^k$ and calling Lemma~\ref{lem:NNAprox} with $c_i = \|\phi_i\|_{\mathcal Y}=1$, we have 
	\begin{equation}\label{eq:tildeFiA}
		\widetilde F_i(x) = P_i \circ (g_i\circ)^{n} \circ L_i (x),
	\end{equation}
	for any $i=1:m$, such that
	\begin{equation}
		\left\| F_i(x) - \widetilde F_i (x)\right\|_{L^{\infty}([-M,M]^k)} \le \frac{\epsilon}{3m}.
	\end{equation}
	
	\paragraph{Construction of $\mathcal O_i$ using the structure of $\widetilde F_i$:} Now, let us define $\mathcal O_i$ as
	\begin{equation}\label{eq:mathcalOi}
		\mathcal O_i(\bm v) = \mathcal P_i \circ (\mathcal G_i\circ)^{n}\circ \mathcal L_i(\bm v) .
	\end{equation}
	More precisely, we can take
	\begin{equation}
		\mathcal L_i (\bm v) = L_i
		\begin{pmatrix}
			(\bm v, \varphi_1) \bm 1(x) \\ \vdots  \\ (\bm v, \varphi_k) \bm 1(x)
		\end{pmatrix}
	\end{equation}
where $L_i$ comes from~\eqref{eq:tildeFiA} when constructing $\widetilde F_i$ and $\bm 1(x) \in \mathcal Z$ denotes the constant function with value $1$ everywhere on $\Omega$.
	For $\mathcal G_i : \mathcal Z^{\widetilde k} \mapsto \mathcal Z^{\widetilde k} $, we denote the deep neural operator with ReLU activation function as
	\begin{equation}
		\mathcal G_i (\bm v) = \sigma\left(\mathcal W^L_{i}\sigma\left( \cdots  \mathcal W_i^2 \sigma(\mathcal W^1_i \bm v + \mathcal B^1_i) + \mathcal B_i^2\right) + \mathcal B_i^L \right).
	\end{equation}
	Furthermore, noticing the definition of $\mathcal L_i$, we take $\mathcal W_i^\ell$ and $\mathcal B_i^\ell$ as
	\begin{equation}
		\left[\mathcal W_i^\ell\right]_{st} (a\bm 1(x)) + \left[\mathcal B_{i}^\ell\right]_s = a \left[W_i^\ell\right]_{st} \bm 1(x) + \left[b_i^\ell\right]_s \bm 1(x),
	\end{equation}
	where $W_i^\ell$ and $b_i^\ell$ are defined by $g_i$ in \eqref{eq:tildeFiA} with
	\begin{equation}
		g_i(x) = \sigma\left( W^L_{i}\sigma\left( \cdots   W_i^2 \sigma( W^1_i x +  b^1_i) +  b_i^2\right) +  b_i^L \right).
	\end{equation}
	This leads to 
	\begin{equation}
		\begin{split}
			(\mathcal G_i\circ)^{n} \circ \mathcal L_i (\bm v) &= 
			\left(
			(\mathcal G_i\circ)^{n} \circ L_i
			\begin{pmatrix}
				(\bm v, \varphi_1)\bm 1(x)  \\ \vdots  \\ (\bm v, \varphi_k) \bm 1(x)
			\end{pmatrix}
			\right) \\
			&=
			\left(
			(g_i\circ)^{n} \circ L_i
			\begin{pmatrix}
				(\bm v, \varphi_1)  \\ \vdots  \\ (\bm v, \varphi_k) 
			\end{pmatrix}
			\right)\bm 1(x) \\
			&= \begin{pmatrix}
				c_{i,1}(\bm v) \bm 1(x) \\ \vdots \\c_{i,\widetilde k}(\bm v) \bm 1(x)
			\end{pmatrix}
		\end{split}
	\end{equation}
	where $c_{i,p}:\mathcal X \mapsto \mathbb R$ are some continuous functionals.
	
	Next, we can define 
	\begin{equation}
		\mathcal P_i 
		\begin{pmatrix}
			c_{i,1}(\bm v) \bm 1(x) \\ \vdots \\c_{i,\widetilde k}(\bm v) \bm 1(x)
		\end{pmatrix} = 
		\left(P_i \begin{pmatrix}
			c_{i,1}(\bm v) \bm 1(x) \\ \vdots\\ c_{i,\widetilde k}(\bm v)  \bm 1(x)
		\end{pmatrix}\right) \phi_i.
	\end{equation}
	
	In the end, we have
	\begin{equation}\label{eq:mathcalOiA}
		\begin{split}
			\mathcal O_i(\bm v) &=  \mathcal P_i \circ (\mathcal G_i\circ)^{n}\circ \mathcal L_i(\bm v) \\
			&= \left(
			P_i \circ (g_i\circ)^{n} \circ L_i 
			\begin{pmatrix}
				(\bm v, \varphi_1) \\ \vdots \\ (\bm v, \varphi_k)
			\end{pmatrix} \phi_i
			\right) \\
			&= \widetilde F_i \left(\left((\bm v, \varphi_1), \cdots, (\bm v, \varphi_k) \right)\right) \phi_i.
		\end{split}
	\end{equation}

	\paragraph{Triangle inequalities to finalize the proof:}
	Given the previous construction, we have
	\begin{equation}
		\begin{split}
			&\sup_{\bm v\in \mathcal C} \left\| f_i(\bm v) \phi_i - \mathcal O_i(\bm v)\right\|_{\mathcal Y} \\
			\le &\sup_{\bm v \in \mathcal C}\left\|f_i(\bm v)\phi_i - f_i\left(\sum_{j=1}^k   \left(\bm v,\varphi_j\right)\varphi_j\right) \phi_i \right\|_{\mathcal Y} \\
			+ &\sup_{\bm v \in \mathcal C} \left\|f_i\left(\sum_{j=1}^k   \left(\bm v,\varphi_j\right)\varphi_j\right) \phi_i - \mathcal O_i(\bm v)\right\|_{\mathcal Y} \\
			\le &\frac{\epsilon}{3m} + \sup_{\bm v \in \mathcal C} \left| f_i\left(\sum_{j=1}^k   \left(\bm v,\varphi_j\right)\varphi_j\right) - \widetilde F_i \left(\left((\bm v, \varphi_1), \cdots, (\bm v, \varphi_k) \right)\right)\right| \|\phi_i\|_{\mathcal Y} \\
			= & \frac{\epsilon}{3m} + \sup_{\bm v \in \mathcal C} \left| F_i \left(\left((\bm v, \varphi_1), \cdots, (\bm v, \varphi_k) \right)\right)- \widetilde F_i \left(\left((\bm v, \varphi_1), \cdots, (\bm v, \varphi_k) \right)\right)\right| \\
			\le &\frac{\epsilon}{3m} + \frac{\epsilon}{3m}  = \frac{2\epsilon}{3m}.
		\end{split}
	\end{equation}
	As a result, we have
	\begin{equation}
		\begin{split}
			&\sup_{\bm v\in \mathcal C} \left\| \mathcal O^*(\bm v) - \sum_{i=1}^m \mathcal O_i(\bm v) \right\|_{\mathcal Y} \\
			\le &\sup_{\bm v\in \mathcal C} \left( \left\| \mathcal O^*(\bm v) - \sum_{i=1}^m f_i(\bm v) \phi_i \right\|_{\mathcal Y} 
			+ \sum_{i=1}^m \|f_i(\bm v) \phi_i - \mathcal O_i(\bm v)\|_{\mathcal Y}  \right)\\
			\le &\frac{\epsilon}{3} + m \frac{2\epsilon}{3m} = \epsilon,
		\end{split}
	\end{equation}
	where $\mathcal O_i$ has the structure as in~\eqref{eq:mathcalOiA}.
	Since $\mathcal O_i$ has a uniform depth, we can then concatenate them to a global neural operator 
	\begin{equation}
		\mathcal O(\bm v) = \begin{pmatrix}
			\mathcal P_1 \\ \vdots\\ \mathcal P_m
		\end{pmatrix}    
		\circ 
		\left(\begin{pmatrix}
			\mathcal G_1 \\ \vdots \\\mathcal G_m 
		\end{pmatrix} \circ\right)^n
		\circ 
		\begin{pmatrix}
			\mathcal L_1 \\ \vdots \\\mathcal L_m
		\end{pmatrix} (\bm v).
	\end{equation}
	with the same depth but neurons as the summation for each sub-structure. 
\end{proof}

For $n=1$, this theorem degenerates to the most commonly used universal approximation theorem of classical neural operators, for example, \cite{kovachki2023neural,he2023mgno}.
For any $n\ge 2$, this result has not been mentioned or discussed in any previous literature. One of the most challenging parts in the above theorem is the self-composition, or one can imagine $(\mathcal G \circ)^n$ as a neural operator with $nL$ hidden layers where in which there is a sharing of parameters along each block which is a neural operator with $L$ hidden layers.

Motivated by the iterative methods in solving numerical PDEs, we have a natural question of whether we can have higher accuracy by just self-composing a fixed size (depth and width) backbone without adding any new parameters. Under a further assumption to the operator $\mathcal O^*$, we have the following theorem that with suitable and fixed width and depth of $\mathcal G$ and under a certain accuracy level, a larger $n$ can achieve a lower error level. 
\begin{theorem}\label{them:rate}
If $\mathcal O^*$ is Lipschitz continuous on a compact $\mathcal C\subset \mathcal X$ (with Lipschitz constant $\|\mathcal O^*\|$), then for any $\epsilon >0$, there exist fixed $N, \widetilde N, L \in \mathbb N^+$ (depending on $\epsilon$ but not on $n$), such that for any $n \in \mathbb N^+$ we can find bounded linear operators
$\mathcal L: \mathcal X \to \mathcal Z^{\widetilde N}$ and $\mathcal P: \mathcal Z^{\widetilde N} \to \mathcal Y$ and a ReLU neural operator $\mathcal G: \mathcal Z^{\widetilde N} \to \mathcal Z^{\widetilde N}$ with $L$ hidden layers and $N$ neurons in each layer satisfying
    \begin{equation}
    \sup_{\bm v \in \mathcal C}\left\| \mathcal O^*(\bm v) - \mathcal P \circ (\mathcal G \circ)^n \circ \mathcal L (\bm v)\right\|_{\mathcal X} \le \epsilon + \frac{C_\epsilon}{\log(n)},
\end{equation}
where $C_\epsilon$ depends on $\epsilon$, the Lipschitz constant of $\mathcal O^*$, and finite-dimensional projection choices, but is independent of $n$. The quantities $L, N, \widetilde N$, and the projection dimension are chosen as functions of $\epsilon$ and remain fixed as $n$ grows.
Here, $\mathcal Z$ can be any function spaces that contain the constant function $\bm 1(x)$ on $\Omega$.
\end{theorem}
The proof mainly follows the steps in the proof of Theorem~\ref{them:approxi}. The key here is to use the Lipschitz condition of the target operator to get an explicit approximation rate in the final bound.
\begin{remark}
While the bound decays as $O(1/\log n)$, it is useful in the fixed-parameter regime: after selecting $L,N,\widetilde N$, increasing $n$ improves accuracy without increasing parameter count. Empirically, we often observe faster decay (Fig.~\ref{fig:conv_curves}b), but the theorem guarantees monotone improvement with depth.
\end{remark}

\begin{proof}
	We split the proof as follows.
	\paragraph{Finite-dimension approximation of $\mathcal O^*$ by projection with Lipschitz continuity} 
	As in the first step in the proof of Theorem~\ref{them:approxi}, those continuous functionals $f_i: \mathcal X \mapsto \mathbb R$ with orthogonal basis $\{\phi_1, \cdots, \phi_m \} \subset \mathcal Y$ for $i=1:m$ are actually defined as 
	$$
	f_i(\bm v) := (\phi_i, \mathcal O^*(\bm v))_{\mathcal Y},
	$$ which are also Lipschitz continuous and
	\begin{equation}
		\sup_{\bm v\in \mathcal C} \left\| \mathcal O^*(\bm v) - \sum_{i=1}^m f_i(\bm v) \phi_i \right\|_{\mathcal Y} \le \frac{\epsilon}{2}.
	\end{equation}
	Moreover, it is easy to verify that the Lipschitz constants of $f_i$ are all bounded by the Lipschitz constant of $\mathcal O^*$, which we denote as $\|\mathcal O^*\|$.
	Thus, we only need to prove that there is deep neural operator $\mathcal O_i = \mathcal P_i \circ (\mathcal G_i\circ)^{n} \circ \mathcal L_i$ such that
	\begin{equation}
		\sup_{\bm v\in \mathcal C} \left\| f_i(\bm v) \phi_i - \mathcal O_i(\bm v)\right\|_{\mathcal Y} \le \frac{\epsilon}{2m} + \frac{C_\epsilon}{m\log(n)},
	\end{equation}
	where $C_\epsilon$ depends only on $\epsilon$ and $\mathcal O^*$ and does not depends on $n$.
	
	\paragraph{Parameterization (approximation) of $\mathcal X$ with finite dimensions to discretize $f_i$}
	Since $\mathcal X = H^{s}(\Omega)$ and $\mathcal C$ is compact, we can find $k \in \mathbb N^+$ such that 
	\begin{equation}
		\sup_{\bm v \in \mathcal C}\left\|f_i(\bm v)\phi_i - f_i\left(\sum_{j=1}^k   \left(\bm v,\varphi_j\right)\varphi_j\right) \phi_i \right\|_{\mathcal X} \le \frac{\epsilon}{2m}  \quad \forall i=1:m,
	\end{equation}
	where $\varphi_i$ are the orthogonal basis in $H^{s}(\Omega)$.
	Then, for a specific $f_i: \mathcal X \mapsto \mathbb R$, let us define the following finite-dimensional continuous function $F_i: \mathbb R^k \mapsto \mathbb R$ as
	\begin{equation}
		F_i(x) = f_i\left(\sum_{j=1}^k x_j \varphi_j \right), \quad \forall x \in [-M,M]^{k},
	\end{equation}
	where $M := \sup_{i}\sup_{u \in \mathcal C} (\bm u,\varphi_i) < \infty$ because of the compactness of $\mathcal C$. Furthermore, for each $i=1:m$, we notice that $F_i(x)$ is also Lipschitz continuous with Lipschitz constant less than the Lipschitz constant of $f_i$ which has a uniform bound $\|\mathcal O^*\|$.
	
	\paragraph{Quantitative approximation of $F_i$ using self-comopsing deep ReLU neural networks} For any $n$, by using the scaling and shifting transformation to map $[-M,M]^k$ to $[-1,1]^k$ and calling Theorem 1.3 in~\cite{zhang2023enhancing}, we have
	\begin{equation}\label{eq:tildeFiB}
		\widetilde F_i(x) = P_i \circ (g_i\circ)^{n} \circ L_i (x),
	\end{equation}
	for any $i=1:m$, such that
	\begin{equation}
		\left\| F_i(x) - \widetilde F_i (x)\right\|_{L^{\infty}([-M,M]^k)} \le \frac{12\sqrt{k}\|\mathcal O^*\|_{\mathcal Y}}{ n^{1/k}} \le \frac{C_\epsilon}{m\log(n)}
	\end{equation}
	for sufficient large $n$ and $C_\epsilon = \mathcal O\left( \sqrt{k}m\|\mathcal O^*\|\right)$ does not depend on $n$.
	Here, $L_i: \mathbb R^k \mapsto \mathbb R^{\widetilde k}$ and $P_i: \mathbb R^{\widetilde k} \mapsto \mathbb R$ are affine mappings and $g_i: \mathbb R^{\widetilde k} \mapsto\mathbb R^{\widetilde k} $ is a deep ReLU neural network function with $L$ hidden layers and $\overline k_i$ neurons in each layer.
	
	The construction of $\mathcal{O}_i$ based on the structure of $\widetilde{F}_i$, as well as the verification of the final results, closely follows the approach used in the proof of Theorem~\ref{them:approxi}. Nevertheless, we include the details here for completeness.
	\paragraph{Construction of $\mathcal O_i$ using the structure of $\widetilde F_i$:} Now, let us define $\mathcal O_i$ as
	\begin{equation}
		\mathcal O_i(\bm v) = \mathcal P_i \circ (\mathcal G_i\circ)^{n}\circ \mathcal L_i(\bm v) .
	\end{equation}
	More precisely, we can take
	\begin{equation}
		\mathcal L_i (\bm v) = L_i
		\begin{pmatrix}
			(\bm v, \varphi_1) \bm 1(x) \\ \vdots  \\ (\bm v, \varphi_k) \bm 1(x)
		\end{pmatrix}
	\end{equation}
where $L_i$ comes from \eqref{eq:tildeFiB} when constructing $\widetilde F_i$ and $\bm 1(x) \in \mathcal Z$ denotes the constant function with value $1$.
	For $\mathcal G_i : \mathcal Z^{\widetilde k} \mapsto \mathcal Z^{\widetilde k} $, we denote the deep neural operator with ReLU activation function as
	\begin{equation}
		\mathcal G_i (\bm v) = \sigma\left(\mathcal W^L_{i}\sigma\left( \cdots  \mathcal W_i^2 \sigma(\mathcal W^1_i \bm v + \mathcal B^1_i) + \mathcal B_i^2\right) + \mathcal B_i^L \right).
	\end{equation}
	Furthermore, noticing the definition of $\mathcal L_i$, we take $\mathcal W_i^\ell$ and $\mathcal B_i^\ell$ as
	\begin{equation}
		\left[\mathcal W_i^\ell\right]_{st} (a\bm 1(x)) + \left[\mathcal B_{i}^\ell\right]_s = a \left[W_i^\ell\right]_{st} \bm 1(x) + \left[b_i^\ell\right]_s \bm 1(x),
	\end{equation}
	where $W_i^\ell$ and $b_i^\ell$ are defined by $g_i$ in \eqref{eq:tildeFiB} with
	\begin{equation}
		g_i(x) = \sigma\left( W^L_{i}\sigma\left( \cdots   W_i^2 \sigma( W^1_i x +  b^1_i) +  b_i^2\right) +  b_i^L \right).
	\end{equation}
	This leads to 
	\begin{equation}
		\begin{split}
			(\mathcal G_i\circ)^{n} \circ \mathcal L_i (\bm v) &= 
			\left(
			(\mathcal G_i\circ)^{n} \circ L_i
			\begin{pmatrix}
				(\bm v, \varphi_1)\bm 1(x)  \\ \vdots  \\ (\bm v, \varphi_k) \bm 1(x)
			\end{pmatrix}
			\right) \\
			&=
			\left(
			(g_i\circ)^{n} \circ L_i
			\begin{pmatrix}
				(\bm v, \varphi_1)  \\ \vdots  \\ (\bm v, \varphi_k) 
			\end{pmatrix}
			\right)\bm 1(x) \\
			&= \begin{pmatrix}
				c_{i,1}(\bm v) \bm 1(x) \\ \vdots \\c_{i,\widetilde k}(\bm v) \bm 1(x)
			\end{pmatrix}
		\end{split}
	\end{equation}
	where $c_{i,p}:\mathcal X \mapsto \mathbb R$ are some continuous functionals.
	
	Next, we can define 
	\begin{equation}
		\mathcal P_i 
		\begin{pmatrix}
			c_{i,1}(\bm v) \bm 1(x) \\ \vdots \\c_{i,\widetilde k}(\bm v) \bm 1(x)
		\end{pmatrix} = 
		\left(P_i \begin{pmatrix}
			c_{i,1} \bm 1(x) \\ \vdots\\ c_{i,\widetilde k}  \bm 1(x)
		\end{pmatrix}\right) \phi_i.
	\end{equation}
	
	In the end, we have
	\begin{equation}\label{eq:mathcalOi1}
		\begin{split}
			\mathcal O_i(\bm v) &=  \mathcal P_i \circ (\mathcal G_i\circ)^{n}\circ \mathcal L_i(\bm v) \\
			&= \left(
			P_i \circ (g_i\circ)^{n} \circ L_i 
			\begin{pmatrix}
				(\bm v, \varphi_1) \\ \vdots \\ (\bm v, \varphi_k)
			\end{pmatrix} \phi_i
			\right) \\
			&= \widetilde F_i \left(\left((\bm v, \varphi_1), \cdots, (\bm v, \varphi_k) \right)\right) \phi_i.
		\end{split}
	\end{equation}

	\paragraph{Triangle inequalities to finalize the proof}
	Given the previous construction, we have
	\begin{equation}
		\begin{split}
			&\sup_{\bm v\in \mathcal C} \left\| f_i(\bm v) \phi_i - \mathcal O_i(\bm v)\right\|_{\mathcal Y} \\
			\le &\sup_{\bm v \in \mathcal C}\left\|f_i(\bm v)\phi_i - f_i\left(\sum_{j=1}^k   \left(\bm v,\varphi_j\right)\varphi_j\right) \phi_i \right\|_{\mathcal Y} \\
			+ &\sup_{\bm v \in \mathcal C} \left\|f_i\left(\sum_{j=1}^k   \left(\bm v,\varphi_j\right)\varphi_j\right) \phi_i - \mathcal O_i(\bm v)\right\|_{\mathcal Y} \\
			\le &\frac{\epsilon}{2m} + \sup_{\bm v \in \mathcal C} \left| f_i\left(\sum_{j=1}^k   \left(\bm v,\varphi_j\right)\varphi_j\right) - \widetilde F_i \left(\left((\bm v, \varphi_1), \cdots, (\bm v, \varphi_k) \right)\right)\right| \|\phi_i\|_{\mathcal Y} \\
			= & \frac{\epsilon}{2m} + \sup_{\bm v \in \mathcal C} \left| F_i \left(\left((\bm v, \varphi_1), \cdots, (\bm v, \varphi_k) \right)\right)- \widetilde F_i \left(\left((\bm v, \varphi_1), \cdots, (\bm v, \varphi_k) \right)\right)\right| \\
			\le &\frac{\epsilon}{2m} + \frac{C_\epsilon}{m \log(n)}.
		\end{split}
	\end{equation}
	As a result, we have
	\begin{equation}
		\begin{split}
			&\sup_{\bm v\in \mathcal C} \left\| \mathcal O^*(\bm v) - \sum_{i=1}^m \mathcal O_i(\bm v) \right\|_{\mathcal Y} \\
			\le &\sup_{\bm v\in \mathcal C} \left( \left\| \mathcal O^*(\bm v) - \sum_{i=1}^m f_i(\bm v) \phi_i \right\|_{\mathcal Y} 
			+ \sum_{i=1}^m \|f_i(\bm v) \phi_i - \mathcal O_i(\bm v)\|_{\mathcal Y}  \right)\\
			\le &\frac{\epsilon}{3} + m \left(\frac{\epsilon}{2m} + \frac{C_\epsilon}{m \log(n)}\right) = \epsilon + \frac{C_\epsilon}{\log(n)},
		\end{split}
	\end{equation}
	where $\mathcal O_i$ has the structure as in~\eqref{eq:mathcalOi1}.
	Since $\mathcal O_i$ has a uniform depth, we can then concatenate them to a global neural operator 
	\begin{equation}
		\mathcal O(\bm v) = \begin{pmatrix}
			\mathcal P_1 \\ \vdots\\ \mathcal P_m
		\end{pmatrix}    
		\circ 
		\left(\begin{pmatrix}
			\mathcal G_1 \\ \vdots \\\mathcal G_m 
		\end{pmatrix} \circ\right)^n
		\circ 
		\begin{pmatrix}
			\mathcal L_1 \\ \vdots \\\mathcal L_m
		\end{pmatrix} (\bm v).
	\end{equation}
	with the same depth but neurons as the summation for each sub-structure. 
	
\end{proof}
%\paragraph{Sketch of the proof:} Given the Lipschitz continuity of $\mathcal O^*$, we can first get the Lipschitz continuity of $f_i(\bm v)$ in $\mathcal O^*(\bm v) \approx \sum_{i=1}^m f_i(\bm v) \phi_i$. Then, a straightforward proof can be obtained by following the steps in the proof of Theorem~\ref{them:approxi} with a quantitative approximation result of deep ReLU neural networks with self-composition structures on $\mathbb R^k$.

Noticing that $N, \widetilde N$, and $L$ are independent of $n$, the above theorem indeed verifies our previous intuition that with a certain budget for the backbone neural operator and under a certain accuracy level, the deeper (larger $n$), the better (lower accuracy). 

\paragraph{Discussion on the convergence rate:} Theorem~\ref{them:rate} provides a quantitative guarantee that the approximation error decreases as the number of compositions $n$ increases, specifically at a rate of $O(1/\log(n))$ beyond a base error $\epsilon$. While this confirms the benefit of deeper composition, the $1/\log(n)$ rate is asymptotically slow compared to error bounds for standard deep neural networks, where error often decreases polynomially or exponentially with explicit increases in width or depth (number of unique layers/parameters). This theoretical rate might suggest that the primary practical advantages of the self-composing structure stem from significant parameter sharing (acting as regularization) and the effectiveness of the Train-and-Unroll strategy (providing good initialization and progressive learning), rather than purely from the asymptotic error reduction due to $n$. The observed empirical error decay (e.g., Figure~\ref{fig:conv_curves}(b)) often appears faster than $1/\log(n)$, suggesting this bound might not be tight or that the constants involved play a significant role in practical regimes. Further investigation is needed to bridge the gap between this theoretical rate and empirical observations, potentially requiring different proof techniques or assumptions.

Numerically, the depth and accuracy scaling in Figure~\ref{fig:conv_curves}(b) verifies this theoretical result, showing error reduction with increasing $n$. Furthermore, we point out that the empirical scaling often appears closer to linear or sub-linear in the log-error plot, which is faster than the $O(1/\log(n))$ rate suggested by the theorem. This discrepancy highlights an interesting theoretical question for future investigation.

\subsection{Exponential accuracy scaling for Darcy and Helmholtz problems}
\label{sec:exp-scaling}
While Theorem~\ref{them:rate} provides a general-purpose approximation guarantee with a logarithmic rate, a much stronger result can be established by leveraging the specific structure of the self-composing operator. When the backbone $\mathcal{G}_\theta$ learns to approximate a contractive iterative update, a property inherent to many effective numerical solvers, the error decays exponentially with depth. This section formalizes this connection, bridging the gap between the proposed architecture and classical fixed-point theory (cf.~\cite{feischl2025neural}). These results apply under standard assumptions for elliptic (Darcy) and indefinite (Helmholtz) operators, providing a rigorous foundation for the high efficiency observed in our experiments.

Throughout, let $\bm v$ denote the PDE coefficients and forcing, and let $u^*(\bm v)$ be the unique weak solution of the PDE posed on a bounded Lipschitz domain.

\paragraph{A contractive fixed-point view.}
Suppose that for each $\bm v$ there exist bounded linear operators $A_{\bm v}:V\to V'$ and $B_{\bm v}:V'\to V$ and a load $F_{\bm v}\in V'$ such that $u^*(\bm v)$ solves $A_{\bm v}u=F_{\bm v}$ and the one-step update
\begin{equation}\label{eq:ideal-update}
\mathcal T_{\bm v}(u) := u + B_{\bm v}\big(F_{\bm v}-A_{\bm v}u\big)
\end{equation}
is a contraction in the energy norm $\|w\|_{B_{\bm v}^{-1}} := (B_{\bm v}^{-1}w,w)^{1/2}$ with factor $\rho\in(0,1)$ uniformly over $\bm v$, i.e.,
$\|\mathcal T_{\bm v}(u)-\mathcal T_{\bm v}(w)\|_{B_{\bm v}^{-1}}\le \rho\,\|u-w\|_{B_{\bm v}^{-1}}$.
Assume our learned backbone $\mathcal G_\theta$ realizes a perturbed update $\widetilde{\mathcal T}_{\bm v}$ that approximates \eqref{eq:ideal-update} uniformly,
\begin{equation}\label{eq:block-bias}
\sup_{u,\bm v}\,\big\|\widetilde{\mathcal T}_{\bm v}(u)-\mathcal T_{\bm v}(u)\big\|_{B_{\bm v}^{-1}}\le \varepsilon_b.
\end{equation}
Then the $n$-fold self-composed operator with initialization $u^0$ satisfies a geometric accuracy law.

\begin{theorem}[Geometric decay for contractive updates]\label{thm:contract}
Under the assumptions above, for all $n\ge1$,
\begin{equation}\label{eq:geom-bound-general}
\big\|u^*(\bm v)-\big(\widetilde{\mathcal T}_{\bm v}\circ\big)^n(u^0)\big\|_{B_{\bm v}^{-1}}
\;\le\; \rho^{n}\,\big\|u^*(\bm v)-u^0\big\|_{B_{\bm v}^{-1}}\;+\;
\frac{1-\rho^{n}}{1-\rho}\,\varepsilon_b.
\end{equation}
Consequently, the error decays exponentially in depth up to a bias floor proportional to the per-block approximation error $\varepsilon_b$.
\end{theorem}
\begin{proof}
Because $\mathcal T_{\bm v}$ is a contraction with fixed point $u^*(\bm v)$ and $\widetilde{\mathcal T}_{\bm v}$ satisfies \eqref{eq:block-bias}, a standard perturbation of Banach's fixed-point iteration yields
$\|u^*-\widetilde{u}^{k+1}\|_{B^{-1}}\le \rho\,\|u^*-\widetilde{u}^{k}\|_{B^{-1}}+\varepsilon_b$ with $\widetilde{u}^{k+1}=\widetilde{\mathcal T}_{\bm v}(\widetilde{u}^{k})$.
Unrolling this recursion gives \eqref{eq:geom-bound-general}.
\end{proof}

The bias term in \eqref{eq:geom-bound-general} is crucial for interpretation: unless the per-block approximation error $\varepsilon_b$ is itself driven close to zero, the iterates saturate at an error floor of order $\varepsilon_b/(1-\rho)$. Therefore deeper self-composition alone does not imply machine-precision accuracy, even though it yields an initial exponential error decay.

\paragraph{Elliptic Darcy operators.}
Consider the Darcy problem $-\nabla\cdot(a(x)\nabla u)=f$ with $ a_{\min} \leq a(x)  \leq a_{\max}$ and homogeneous Dirichlet boundary conditions. Let $A_{\bm v}$ be the (SPD) stiffness operator associated with the bilinear form $a(u,v)=\int_\Omega a(x)\nabla u\cdot\nabla v\,dx$, and let $B_{\bm v}$ be a multigrid $V$-cycle preconditioner. Under standard assumptions, $B_{\bm v}$ is spectrally equivalent to $A_{\bm v}^{-1}$ uniformly in the mesh size (though the constants may depend on the contrast $a_{\max}/a_{\min}$). This implies that the error-propagation operator $I-B_{\bm v}A_{\bm v}$ is a contraction in the $\|\cdot\|_{B_{\bm v}^{-1}}$-norm with a factor $\rho\in(0,1)$ independent of the discretization \cite{hackbusch2013multi,trottenberg2000multigrid,xu1989theory}. Therefore Theorem~\ref{thm:contract} applies.
\begin{corollary}[Mesh-independent geometric rate for Darcy]\label{cor:darcy}
If $\mathcal G_\theta$ approximates one $V$-cycle update so that \eqref{eq:block-bias} holds with $\varepsilon_b$, then for Darcy flow with coefficients in a fixed range $[a_{\min}, a_{\max}]$,
\begin{equation}
\big\|u^*(\bm v)-\big(\widetilde{\mathcal T}_{\bm v}\circ\big)^n(u^0)\big\|_{B_{\bm v}^{-1}}\;\le\; \rho^{n}\,\big\|u^*(\bm v)-u^0\big\|_{B_{\bm v}^{-1}}+\frac{1-\rho^{n}}{1-\rho}\,\varepsilon_b,
\end{equation}
with $\rho\in(0,1)$ independent of mesh size.
\end{corollary}

\paragraph{Helmholtz with impedance boundary conditions.}

For the indefinite Helmholtz operator $A_k = -\Delta - k^2$, establishing a rigorous contractive property for simple stationary iterations is significantly more challenging than in the elliptic case. Standard preconditioners, such as the shifted Laplacian $M_{k,\sigma}^{-1} \approx (-\Delta - (1-i\sigma)k^2)^{-1}$, are typically used within Krylov subspace methods (e.g., GMRES) rather than as fixed-point iterations, because the spectrum of the preconditioned operator $M_{k,\sigma}^{-1} A_k$ often lies in a cluster that does not guarantee $\rho(I - M_{k,\sigma}^{-1} A_k) < 1$ for high wavenumbers $k$.

However, the self-composing neural operator framework is not restricted to mimicking simple linear stationary iterations. The backbone $\mathcal{G}_\theta$ has the capacity to learn a non-linear, optimized update rule that may behave more like a locally contractive map or a step of a more sophisticated solver. In our USCT experiments (Section~\ref{sec:numerics}), we observe that the error decays geometrically with depth (see Figure~\ref{fig:conv_curves}(b)), strongly suggesting that the trained model has successfully learned a contractive update strategy for the distribution of wave numbers in the dataset. This empirical behavior aligns with the geometric convergence predicted by Theorem~\ref{thm:contract}, even if a formal proof of contractivity for a standard numerical baseline remains elusive in the high-frequency regime. This highlights the potential of learning-based methods to discover effective iterative solvers where classical theory is limited.

\subsection{Experiments for self-composing neural operators}

\begin{table}[ht]
\caption{Performance comparison for Darcy benchmarks. Errors reported as relative $L^2$ ($\times 10^{-2}$) and relative $H^1$ ($\times 10^{-2}$). $H^1$ = relative $H^1$ semi-norm error, GT = Galerkin Transformer~\cite{cao2021choose}, LSM = Latent Spectral Model~\cite{wu2023solving}. The cost columns are profiled on the standard $421\times421$ Darcy Rough grid on one NVIDIA RTX A100 GPU.}
\label{tab:darcy}
	\begin{center}
	\footnotesize
	\resizebox{\textwidth}{!}{%
	\begin{tabular}{lccccccccccc}
	\toprule
	\textbf{Model} & \textbf{Time} & \textbf{Total Time} & \textbf{Mem} & \textbf{Params} & & \multicolumn{2}{c}{\textbf{Darcy smooth}} & \multicolumn{2}{c}{\textbf{Darcy rough}} & \multicolumn{2}{c}{\textbf{Multiscale}}\\
	& \textbf{(s/iter)} & \textbf{(h)} & \textbf{(GB)} & \textbf{(M)} & & $L^2$ & $H^1$ & $L^2$ & $H^1$ & $L^2$ & $H^1$ \\ 
	\midrule
	\multicolumn{12}{l}{\textit{Baseline methods}} \\
	\midrule
	% DilResNet & 14.9 & 2.1 & 1.2 & 1.04 & & 4.104 & 5.815 & 7.347 & 12.44 & 1.417 & 3.528 \\
	% UNet & 9.1 & 1.3 & 2.8 & 17.27 & & 2.169 & 4.885 & 3.519 & 5.795 & 1.425 & 5.012 \\
	% U-NO & 11.4 & 1.6 & 3.2 & 16.39 & & 0.492 & 1.276 & 1.023 & 3.784 & 1.187 & 5.380 \\
	% MWT$^{\dagger}$ & 21.7 & 3.0 & 4.5 & 9.80 & & --- & --- & 1.138 & 4.107 & 1.021 & 7.245 \\
	GT & 38.2 & 5.3 & 3.5 & 2.22 & & 0.945 & 3.365 & 1.790 & 6.269 & 1.052 & 8.207 \\
	LSM & 18.2 & 2.5 & 2.1 & 4.81 & & 0.601 & 2.610 & 2.658 & 4.446 & 1.050 & 4.226 \\
	\midrule
	\multicolumn{12}{l}{\textit{Neural operators without self-composition}} \\
	\midrule
	FNO2D & 7.4 & 1.0 & 2.7 & 2.37 & & 0.684 & 2.583 & 1.613 & 7.516 & 1.800 & 9.619 \\
	MgNO & \textbf{6.6} & 0.9 & 6.4 & \textbf{0.57} & & \textbf{0.153} & \textbf{0.711} & \textbf{0.339} & \textbf{1.380} & \textbf{0.715} & \textbf{1.756} \\
	\midrule
	\multicolumn{12}{l}{\textit{Neural operators with self-composition }} \\
	\midrule
	FNO2D-self & 7.4 & 1.0 & 2.7 & 0.56 & & 0.751 & 3.102 & 1.981 & 8.115 & 2.064 & 11.070 \\
	MgNO-self & \textbf{6.7} & 0.9 & 6.4 & \textbf{0.17} & & \textbf{0.187} & \textbf{0.813} & \textbf{0.371} & \textbf{1.514} & \textbf{0.800} & \textbf{2.871} \\
	\bottomrule
	\end{tabular}
	}
	\end{center}
	\vspace{2mm}
	% \begin{minipage}{\textwidth}
	% \footnotesize
	% Runtime and memory columns are reported on the standard $421\times421$ Darcy Rough grid. 
	% \end{minipage}
	\end{table}
The Darcy cost columns in Table~\ref{tab:darcy} should be interpreted as end-to-end surrogate training costs on the $421\times421$ Darcy Rough grid; on the same RTX A100 GPU, a single discrete residual evaluation takes about $0.70$ ms for one sample and $0.80$ ms for a batch of 10.
	\begin{figure}[H]
	\centering
	\subfloat{\includegraphics[width=0.23\textwidth]{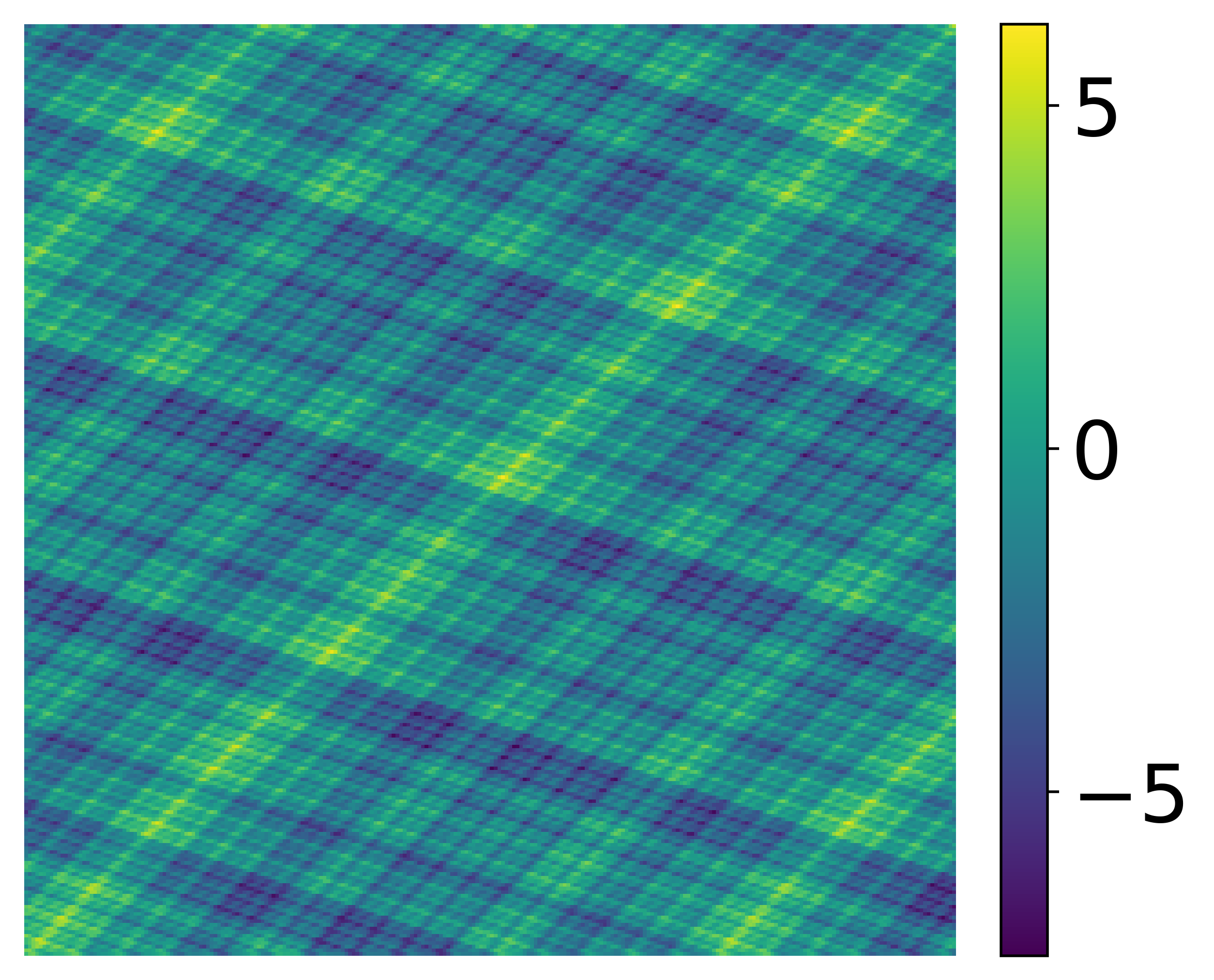}}
	\subfloat{\includegraphics[width=0.22\textwidth]{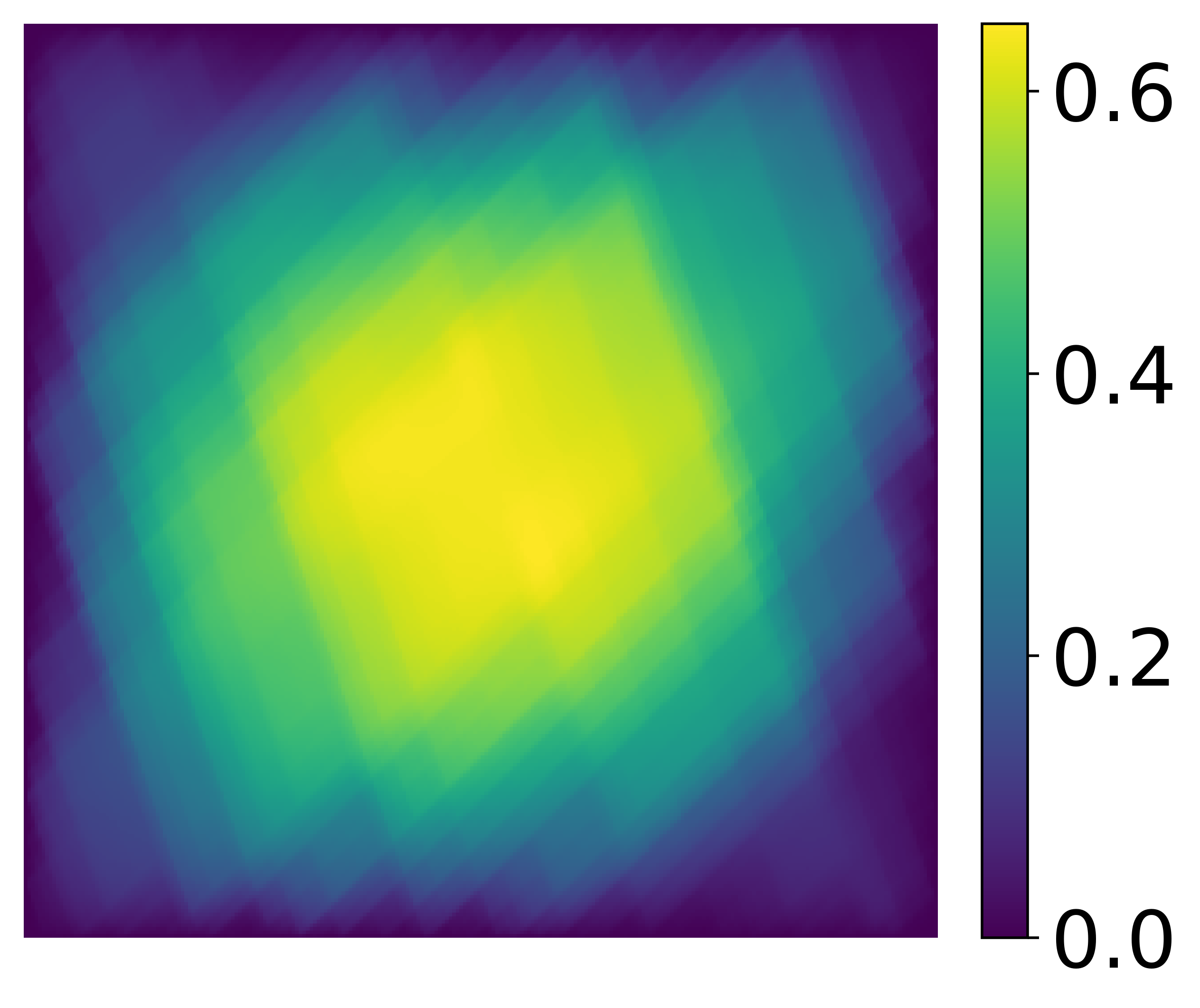}}
	\subfloat{\includegraphics[width=0.22\textwidth]{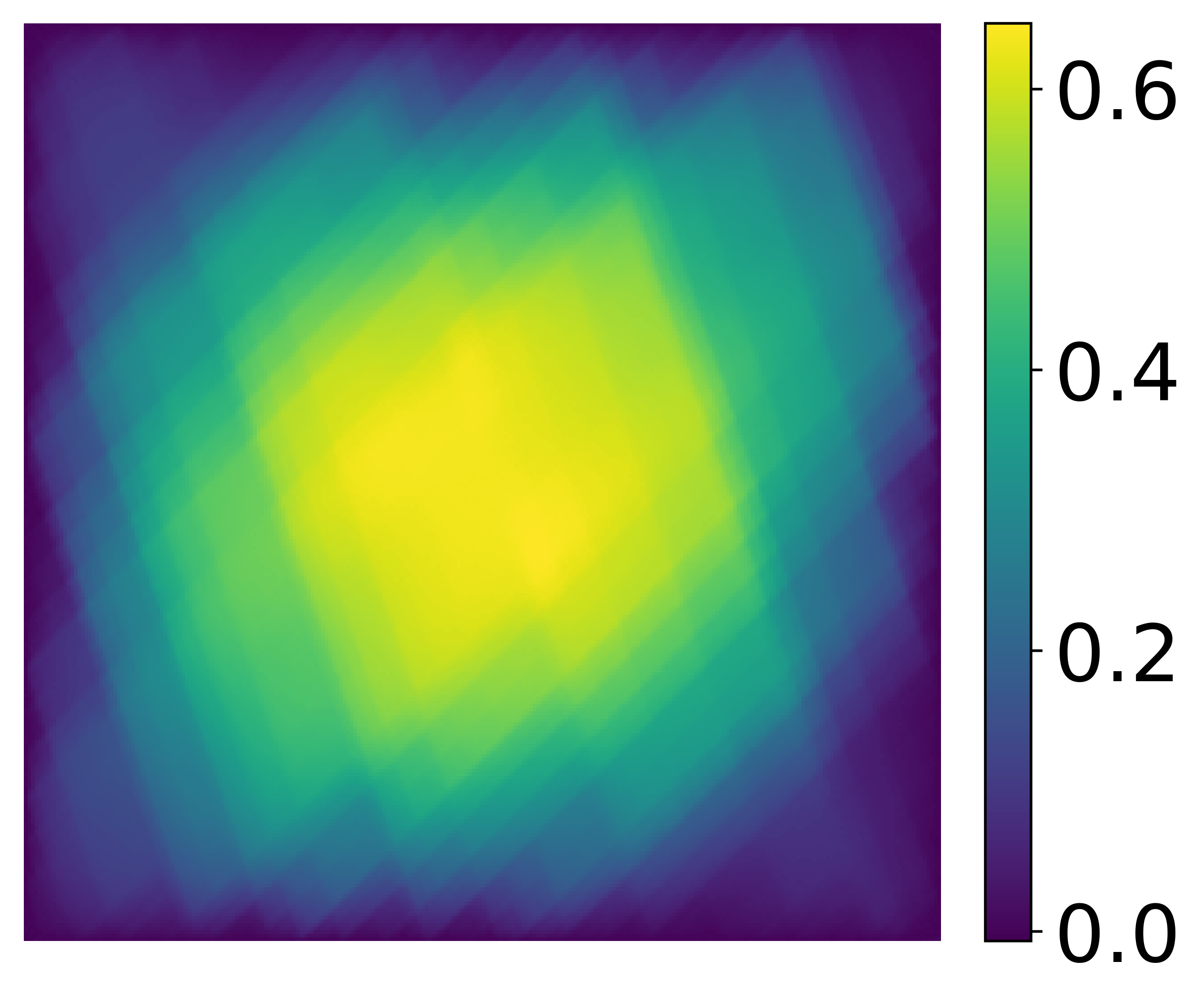}}
	\subfloat{\includegraphics[width=0.25\textwidth]{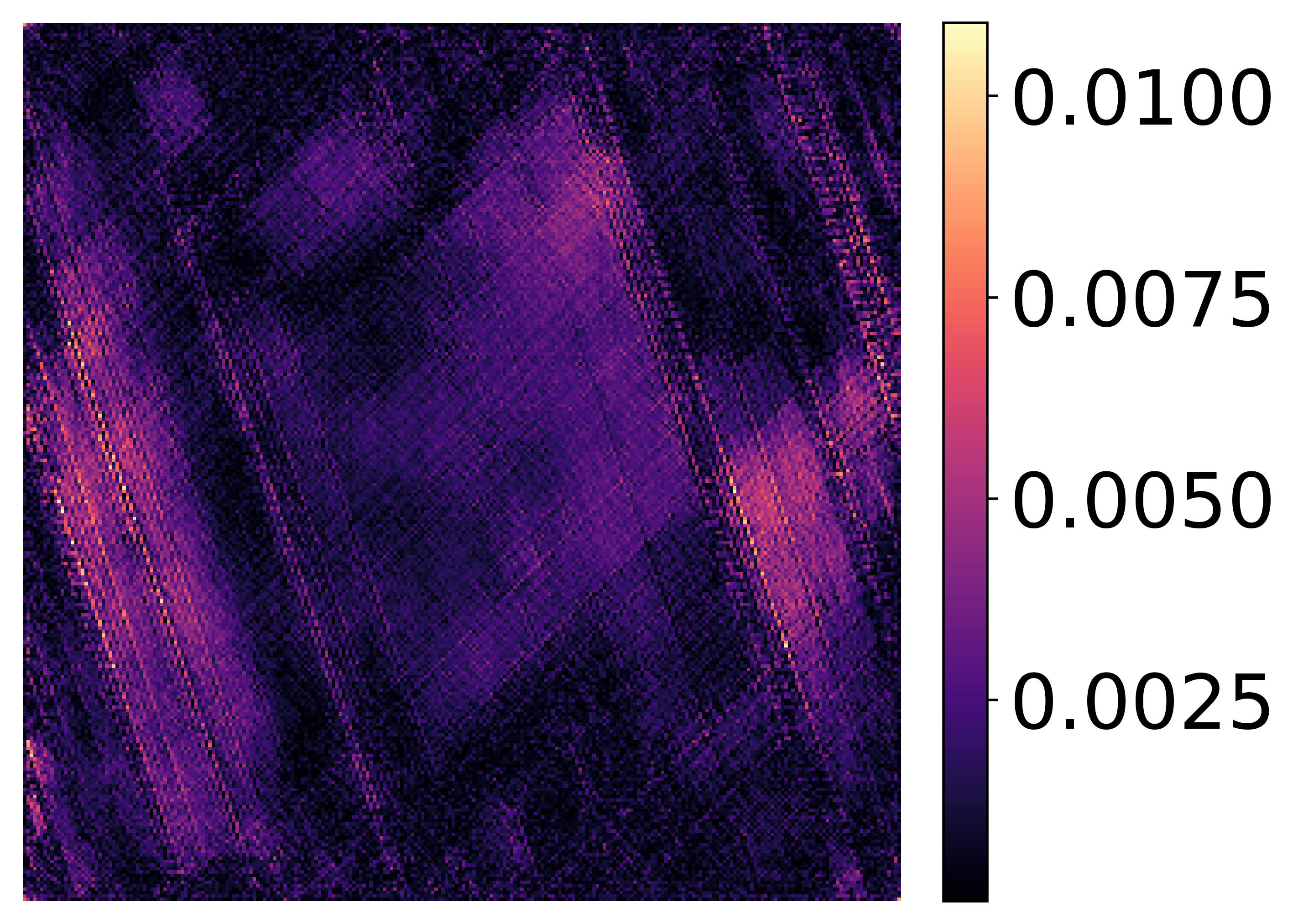}}
	\caption{Multiscale Darcy flow test case. (a) Input coefficient field (log2 scale). (b) Ground-truth solution. (c) MgNO-self prediction. (d) Absolute error $|u_{\mathrm{pred}}-u_{\mathrm{true}}|$. }
	\label{fig:example-4panel}
\end{figure}

Table~\ref{tab:darcy} offers initial experimental evidence for the practical benefits of the self-composing neural operator architecture. The results on Darcy flow benchmarks \cite{li2020fourier} and multiscale benchmark \cite{liu2022ht} demonstrate that models employing shared-parameter self-composition (denoted with ``-self", e.g., FNO2D-self, MgNO-self) attain competitive approximation accuracy in both relative $L^2$ and $H^1$ errors when compared to their counterparts without parameter sharing (e.g., FNO2D~\cite{li2020fourier}, MgNO~\cite{he2023mgno}). The performance on the multiscale benchmark is particularly noteworthy. For the Darcy experiments, MgNO follows the standard six-block MgNO configuration with width 32 and three input channels on the $421\times421$ benchmark grid; each block uses $3\times3$ convolutional operator and smoother maps together with strided restriction and transposed-convolution prolongation across the multigrid hierarchy. Table~\ref{tab:darcy} highlights the parameter efficiency of self-composed models. MgNO-self achieves competitive accuracy with only 0.17M parameters compared to 0.57M for MgNO, while FNO2D-self uses 0.56M versus 2.37M for FNO2D. This substantial reduction in size confirms that iteratively applying a shared backbone $\mathcal{G}_\theta$ yields deep, expressive operators with far fewer trainable degrees of freedom.  It is worth noting, however, that lower parameter count does not by itself imply lower activation memory or shorter wall-clock training time. For matched unroll depth, the repeated forward/backward graph is still executed sequentially, so peak training memory remains governed largely by activations rather than by the smaller tied parameter set. The main practical benefit is therefore parameter efficiency and regularization and the possibility for a more efficient training process as will be discussed in Section~\ref{sec:train}. Further evaluations on the Helmholtz equation and the Train-and-Unroll strategy follow in Section~\ref{sec:numerics}.

Figure~\ref{fig:example-4panel} illustrates a representative multiscale Darcy prediction and its pointwise error.

\section{Dynamic depth training algorithm}\label{sec:train}
The complete Train-and-Unroll procedure is summarized in Algorithm~\ref{alg:train-unroll}.
In this section, we propose a novel training strategy for deep self-composed neural operators, termed the \textbf{Train-and-Unroll} approach. Instead of training a deep model with fixed depth directly, \textbf{Train-and-Unroll} incrementally increases the number of self-compositions during training, starting from a shallow architecture. 

\paragraph{Motivation}
Training deep neural operators $\mathcal{O}^{(\ell)} = \mathcal{P} \circ (\mathcal{G}_\theta \circ)^\ell \circ \mathcal{L}$ is challenging, as determining the optimal depth $n$ is difficult and direct training is computationally expensive. Balancing complexity and accuracy typically requires resource-intensive experimentation.

Our \textbf{Train-and-Unroll} strategy mitigates this by incrementally increasing depth $\ell$ during training. We initialize each deeper stage with weights from the converged shallower model. This approach leverages the shared backbone $\mathcal{G}_\theta$ to accelerate convergence and allows for adaptive depth selection based on performance and budget.

\begin{algorithm}[H] \label{alg:train-unroll}
\caption{Train-and-Unroll Algorithm (Inputs/Outputs and complexity). Inputs: training set $\mathcal D$, maximum depth $n$, backbone $\mathcal G_\theta$, projection $\mathcal L$, and readout $\mathcal P$. Outputs: trained models $\mathcal{O}^{(1)},\ldots,\mathcal{O}^{(n)}$. Per-stage complexity is dominated by applying $(\mathcal G_\theta)^\ell$ and backpropagation; the parameter count remains fixed while activation memory scales with $\ell$. The loss function  at stage $\ell$ is defined as 
$\mathrm{Loss} (\theta; \ell) = \dfrac{1}{N} \sum_{j=1}^N \left\| \mathcal{O}^{(\ell)}(\bm{u}_j) - \bm{v}_j \right\|^2$.
}
\begin{algorithmic}[1]
\STATE \textbf{Initialization:} Initialize $\mathcal{G}_\theta$, $\mathcal{P}$, $\mathcal{L}$. Set initial model $\mathcal{O}^{(1)} := \mathcal{P} \circ (\mathcal{G}_\theta \circ)^1 \circ \mathcal{L}.$
\STATE \textbf{Step 1:} Train $\mathcal{O}^{(1)}$ by minimizing $\mathrm{Loss}(\theta; 1)$ until convergence, yielding $\theta^{(1)}$.
\STATE \textbf{Step 2:} Incrementally increase depth:
\FOR{$\ell = 2$ to $n$}
\STATE Initialize $\mathcal{O}^{(\ell)} := \mathcal{P} \circ (\mathcal{G}_{\theta^{(\ell-1)}} \circ)^\ell \circ \mathcal{L}$ using parameters $\theta^{(\ell-1)}$.
\STATE Train $\mathcal{O}^{(\ell)}$ by minimizing $\mathrm{Loss}(\theta; \ell)$ until convergence, obtaining $\theta^{(\ell)}$.
\ENDFOR
\STATE \textbf{Output:} A series of trained models $\mathcal{O}^{(1)}, \dots, \mathcal{O}^{(n)}$ with parameters $\theta^{(1)}, \dots, \theta^{(n)}$.
\end{algorithmic}
\end{algorithm}

\begin{remark}[Parameter count and gradient path]
For models of the form $\mathcal{O}^{(\ell)}=\mathcal P\circ(\mathcal G_\theta\circ)^{\ell}\circ\mathcal L$ with tied $\theta$, the trainable parameter count is independent of $\ell$. Increasing $\ell$ increases the backpropagation path length linearly in $\ell$, potentially affecting gradient stability; the Train-and-Unroll curriculum mitigates this by warm-starting deeper stages from shallower optima (cf. Fig.~\ref{fig:conv_curves}a).
\end{remark}

\begin{figure}[H]
\centering

\begin{tikzpicture}[
    node distance=0.7cm and 0.7cm, % Reduced vertical and horizontal node distance
    stage/.style={
        rectangle, draw, rounded corners, thick,
        fill=blue!10,
        minimum height=3cm, % Slightly reduced height
        minimum width=2.8cm, % Reduced width
        text centered, text width=2.6cm, % Reduced text width
        drop shadow,
        font=\footnotesize % Smaller font for stage content
    },
    param_transfer/.style={
        font=\scriptsize, text centered, text width=1.5cm, inner sep=1pt % Removed draw, fill, rounded corners
    },
    arrow/.style={-Stealth, thick, draw=blue!60!black},
    titlefont/.style={font=\bfseries\small} % Style for a potential title within TikZ
]

% Optional: A central box to emphasize the shared backbone
\node[rectangle, draw, dashed, fill=yellow!15, thick, rounded corners,
      text width=3.8cm, text centered, drop shadow, % Reduced text width
      minimum height=1cm, % Reduced height
      font=\scriptsize] (shared_G_box_anchor) at (3.6,3.6) % Adjusted position
      {\textbf{Shared Backbone Operator} $\mathcal{G}_\theta$: Params $\theta$ initialized \& refined across stages.};

% Stage 1: l=1
\node[stage] (stage1) {
    \textbf{Stage 1: Depth $\ell=1$}\\[3pt]
    \textbf{Initialize }$\mathcal{O}^{(1)} := \mathcal{P} \circ \mathcal{G}_{\theta} \circ \mathcal{L}$\\[3pt]
    Train $\mathcal{O}^{(1)}$
    to obtain converged parameters $\theta^{(1)}$.
};

% Stage 2: l=2
\node[stage, right=of stage1] (stage2) {
    \textbf{Stage 2: Depth $\ell=2$}\\[3pt]
    \textbf{Initialize }$\mathcal{O}^{(2)} := \mathcal{P} \circ (\mathcal{G}_{\theta^{(1)}} \circ)^2 \circ \mathcal{L}$\\[3pt]
    Retrain $\mathcal{O}^{(2)}$
    to obtain converged parameters $\theta^{(2)}$.
};

% Dots for intermediate stages
\node[right=0.6cm of stage2, font=\bfseries\Large] (dots) {\dots}; % Reduced spacing

% Stage k: l= n (representing a generic step up to n)
\node[stage, right=0.6cm of dots] (stagen) { % Reduced spacing
    \textbf{Stage n: Depth $\ell=n$}\\[3pt]
    \textbf{Initialize }$\mathcal{O}^{(n)} := \mathcal{P} \circ (\mathcal{G}_{\theta^{(n-1)}} \circ)^n \circ \mathcal{L}$\\[3pt]
    Retrain $\mathcal{O}^{(n)}$
    to obtain converged parameters $\theta^{(n)}$.
};

% Arrows for parameter transfer
\draw[arrow] (stage1.east) -- (stage2.west)
    node[midway, above, param_transfer, yshift=0.15cm] { $\theta^{(1)}$ }; % yshift adjusted

\draw[arrow] (stage2.east) -- (dots.west)
    node[midway, above, param_transfer, yshift=0.15cm] {$\theta^{(2)}$}; % yshift adjusted

\draw[arrow] (dots.east) -- (stagen.west)
    node[midway, above, param_transfer, yshift=0.15cm] { $\theta^{(n-1)}$ }; % yshift adjusted

% Illustrative connection to the shared G_theta box (optional, can be subtle)
\draw[->, dashed, thin, draw=yellow!70!black, bend angle=10, bend left] (shared_G_box_anchor.south) to ($(stage1.north) + (0,0.05)$);
\draw[->, dashed, thin, draw=yellow!70!black] (shared_G_box_anchor.south) to ($(stage2.north) + (0,0.05)$);
\draw[->, dashed, thin, draw=yellow!70!black, bend angle=10, bend right] (shared_G_box_anchor.south) to ($(stagen.north) + (0,0.05)$);

\end{tikzpicture}
\caption{Illustration of the Train-and-Unroll (T\&U) strategy. The process starts with training a shallow model (Stage 1, depth $\ell=1$). Converged parameters $\theta^{(1)}$ of the shared backbone $\mathcal{G}_\theta$ are then used to initialize $\mathcal{G}_\theta$ for a deeper model (Stage 2, depth $\ell=2$), which is then trained. This incremental process of parameter transfer and retraining continues up to the desired depth $n$.}
\label{fig:train_and_unroll_diagram}
\end{figure}

\paragraph{Novelty and Practical Advantages}
The T\&U strategy uniquely adapts to the shared-parameter structure $\mathcal{P} \circ (\mathcal{G}_\theta \circ)^n \circ \mathcal{L}$, offering distinct benefits:
\begin{itemize}
	\item \textbf{Efficiency:} Training begins with inexpensive shallow models. Weight transfer accelerates convergence for deeper stages, reducing overall computational cost.
	\item \textbf{Adaptive Depth:} The process yields a sequence of usable models $\mathcal{O}^{(1)}, \dots, \mathcal{O}^{(n)}$, allowing early stopping once accuracy targets are met.
	\item \textbf{Stability:} Gradual depth increases act as a curriculum, stabilizing optimization compared to training full-depth models from scratch.
	\item \textbf{Extensibility:} Existing models can be easily extended to greater depths without restarting training.
\end{itemize}

\paragraph{Convergence Behavior}
Figure~\ref{fig:conv_curves} demonstrates the effectiveness and limitation of the Train-and-Unroll strategy. In the real-data ablation run shown in Figure~\ref{fig:conv_curves}(a), T\&U reaches lower loss than direct target-depth training while still providing usable intermediate models $O^{(3)}$ and $O^{(4)}$ before the final $O^{(5)}$ stage. Furthermore, the training error consistently decreases as the composition depth $\ell$ increases (Figure~\ref{fig:conv_curves}(b)), validating the theoretical error-decaying property of the self-composing architecture (Theorem~\ref{them:rate}) while also showing that the decay saturates above machine precision.

\begin{figure}[H] % Keep figure as is
	\centering
	\subfloat[]{
	\includegraphics[width=0.55\linewidth]{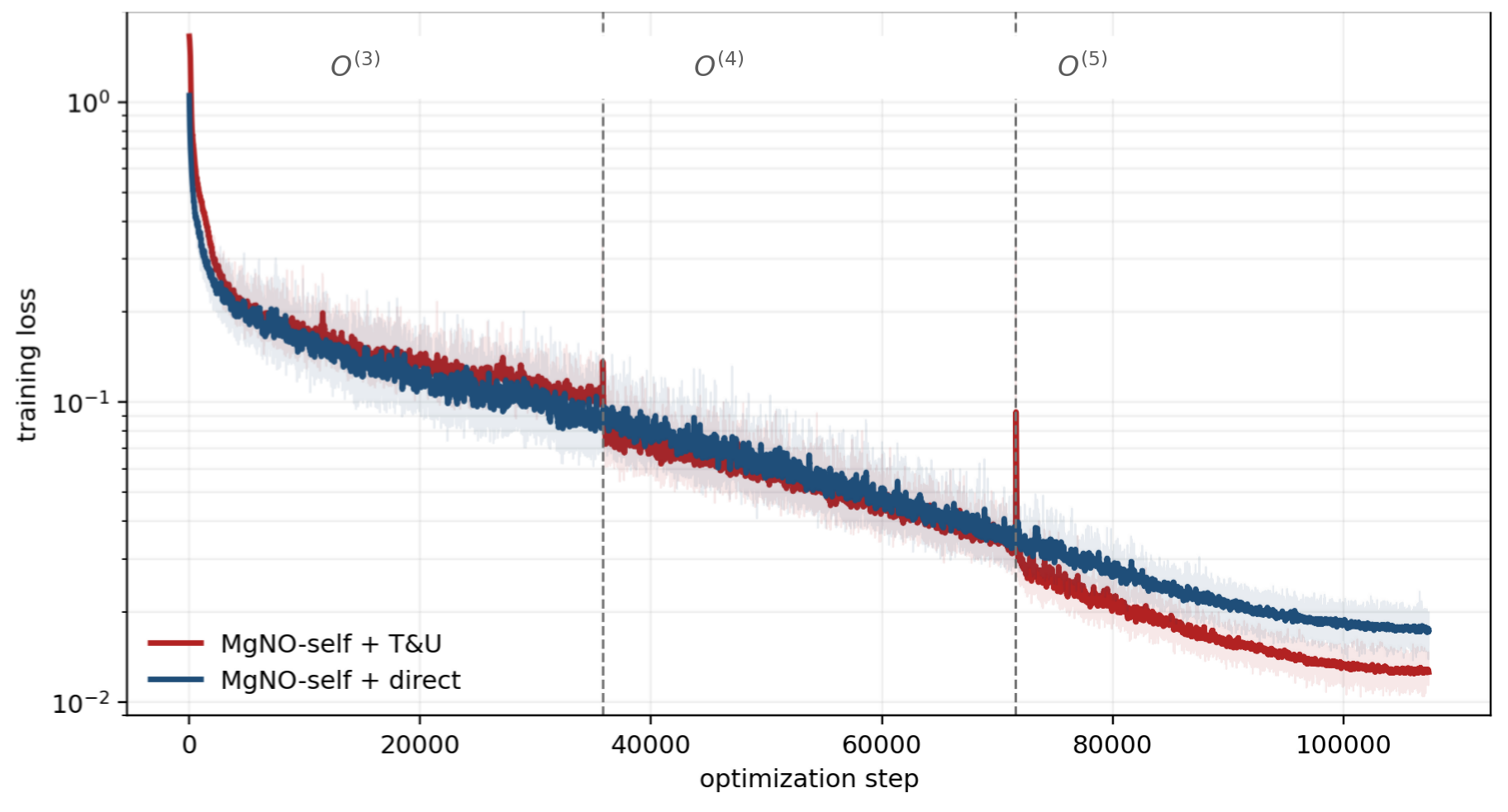}}  \quad 
	\subfloat[]{\includegraphics[width=.4\textwidth]{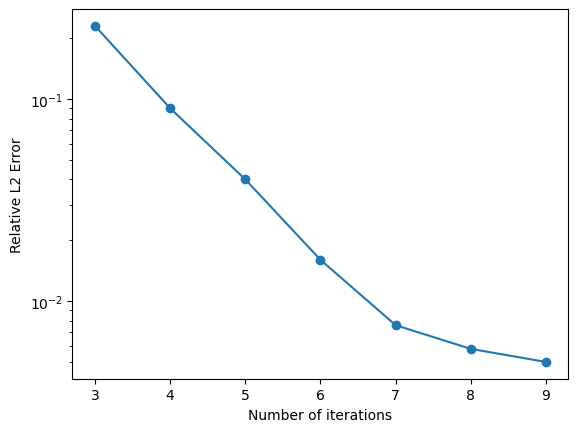}}
\caption{(a) Training-loss comparison between Train-and-Unroll (red) and direct training (blue) on the real-data USCT ablation run. T\&U follows the three-stage schedule $O^{(3)}\rightarrow O^{(4)}\rightarrow O^{(5)}$ with transitions after epochs 10 and 20, showing brief loss spikes at each depth increase followed by rapid recovery and lower final loss than direct target-depth training. (b) Decay of final training error with increasing depth $\ell$; the initial near-geometric decay eventually reaches a nonzero training-error floor rather than machine precision.}
	\label{fig:conv_curves}
\end{figure}
Figure~\ref{fig:conv_curves}(b) also clarifies a limitation of depth growth: increasing the number of self-compositions does not drive the surrogate to machine precision by itself. The error first decays nearly geometrically and then saturates above machine precision, which is consistent with the contractive analysis in Section~\ref{sec:exp-scaling}. Once the contraction term becomes smaller than the learned block bias or training-error floor, further compositions mainly recycle the approximation error of the trained backbone rather than eliminating it.
For reference, the updated real-data experiment in Figure~\ref{fig:conv_curves}(a) uses 1,790 training speed maps and 100 validation maps from OpenBreastUS. All compared MgNO variants are trained for 30 epochs with the relative $L^2$ loss, batch size 10, initial learning rate $3\times 10^{-4}$, and the same target depth $\ell=5$ on a single NVIDIA RTX A6000 GPU; T\&U allocates 10 epochs each to $O^{(3)}$, $O^{(4)}$, and $O^{(5)}$. In the completed run used for the wall-clock comparison, the staged T\&U schedule reduces total training time from 32.04 h for direct target-depth training to 23.96 h, a measured 25.2\% saving. Across three separate random initialization seeds under the same setup, direct training finishes with final validation loss $0.0194 \pm 0.0015$, whereas T\&U reaches $0.0151 \pm 0.0012$.

% Furthermore, regarding stability, initializing deeper models using the optimized weights of preceding shallower stages mitigates the optimization difficulty of deeply unrolled recurrent graphs. Across three separate random initialization seeds, direct target-depth training reaches a final validation loss of $0.0194 \pm 0.0015$, whereas the T\&U curriculum reaches $0.0151 \pm 0.0012$, indicating both lower average error and slightly reduced run-to-run variation. These multi-seed results support the view that T\&U acts as a stabilizing curriculum for depth scaling rather than merely matching the direct-training endpoint.

\section{An application to ultrasound computed tomography with specific backbone architecture}\label{sec:backbone}
In this section, we propose a specific architecture for the backbone operator $\mathcal G_\theta$ within the self-composing neural operator framework $\mathcal O (\bm v) = \mathcal P \circ (\mathcal G_\theta \circ)^n \circ \mathcal L (\bm v)$, tailored for solving the Helmholtz equation arising in ultrasound computed tomography (USCT).

\subsection{Operator learning task for Helmholtz equation}\label{sec:helmholtz_task}
In many physical problems, the input data consists of multiple fields, such as the wavenumber field $k(x)$ and the source term $f(x)$ in the Helmholtz equation. The neural operator should be able to learn the mapping from these multiple fields to the output field $u(x)$, which satisfies the governing equation.

Let $\Omega\subset\mathbb{R}^d$ ($d=2,3$) be a bounded Lipschitz domain with boundary $\Gamma$ and outward normal $n$.
Given a  variable wavenumber field $\,k\in L^\infty(\Omega)$ with
$0<k_{\min}\le k(x)\le k_{\max}<\infty$, and a boundary
wavenumber $k|_\Gamma>0$. Given a source   $f\in L^2(\Omega)$,
find $u\in H^1(\Omega)$ such that
\begin{equation}\label{eq:helmholtz-strong}
\begin{cases}
-\Delta u - k(x)^2\,u = f & \text{in }\Omega,\\[2pt]
\partial_n u - i\,k|_\Gamma\,u = 0 & \text{on }\Gamma.
\end{cases}
\end{equation}
For our numerical experiments on this task, we utilize the \textbf{OpenBreastUS} dataset, a comprehensive benchmark for wave imaging in breast ultrasound computed tomography recently introduced by Zeng et al. \cite{zeng2025openbreastus}. This dataset presents significant challenges due to the highly heterogeneous and multiscale nature of the wavenumber fields, which model different breast tissues. Accurately resolving the complex wave scattering and diffraction phenomena in such media is a difficult task for neural operators, making it an excellent testbed for evaluating model performance \cite{benitez2024out}.

We nevertheless do not interpret OpenBreastUS as a complete high-contrast generalization test. Larger sound-speed contrast can increase scattering, reflection, and multipath effects, and it should be evaluated directly in future stronger-contrast benchmarks. The 300--500 kHz multi-frequency setting used here is related but not equivalent: for a fixed medium, increasing frequency increases the wavenumber $k=\omega/c$ and therefore the phase accumulation and interference induced by the same relative sound-speed variation. Thus the higher-frequency USCT tests should be read as a frequency/scattering stress test within the OpenBreastUS distribution, not as evidence of robust extrapolation to arbitrary high-contrast media.

Following the self-composition paradigm, we learn a backbone operator $\mathcal{G}_\theta$ such that repeated application approximates the solution. Let $\vu, \vk, \vf$ be the discretized fields. The iteration takes the form:
\begin{equation}
\vu^i = \mathcal{G}_\theta(\vu^{i-1}, \vk, \vf), \quad i=1, \dots, n
\end{equation}
starting from $\vu^0 = 0$. 
%The overall operator fits the form $\mathcal P \circ (\mathcal G_\theta \circ)^n \circ \mathcal L$, where $\mathcal L$ initializes the state and $\mathcal P$ extracts the final solution $\vu^n$.

\subsection{Multigrid-inspired backbone architecture}\label{sec:backbone_architecture}

For the Helmholtz problem, we instantiate $\mathcal{G}_\theta$ using a learnable multigrid V-cycle architecture \cite{he2019mgnet, he2023mgno}, chosen for its effectiveness in capturing multi-scale features and handling high-frequency errors.

The operator $\mathcal{G}_\theta$ updates a solution estimate $\vu$ using the wavenumber field $\vk$ and source $\vf$ via a residual update. Its core is a V-cycle with learnable components:
Implementation-wise, the USCT model first lifts the two-channel wave/source field $\theta$ and the one-channel sound-speed field into three 24-channel latent tensors through learned $1\times1$ convolutions, producing the multigrid state variables $u$, $f$, and $a$. The backbone then applies a six-level hierarchy on resolutions 480, 239, 119, 59, 29, 14, and 6. At each level, the local linear operators are implemented by learned $3\times3$ convolutions inside a dynamic update block: one pair of convolutions extracts a sound-speed-dependent gate from $a$, another convolution applies the residual-style operator update to $f$, and a final $3\times3$ convolution updates $u$. Coarsening to the next level is performed by learned stride-2 $3\times3$ convolutions on the latent solution, right-hand side, and coefficient channels, while prolongation is performed by learned transposed convolutions with level-dependent $3\times3$/$4\times4$ kernels chosen to match the non-power-of-two grid sizes. In the self-composing model used in the experiments, one initialization MG block is followed by a shared MG block that is reused across the remaining self-compositions, and the final two-channel wave field is obtained by a $1\times1$ projection together with an outer residual connection to the input field $\theta$.
\begin{itemize}
	\item \textbf{Grid Transfers:} Restriction $\mathcal{R}$ and prolongation $\mathcal{P}$ are implemented as strided and transposed convolutions, respectively.
	\item \textbf{Learnable Operators:} At each grid level $h$, we employ a learnable PDE operator $\mathcal{A}_h$ and smoother $\mathcal{S}_h$. Both are realized using the \emph{Adaptive Convolution Mechanism}(AdaConv) to modulates the features of a wave field $\vx$ based on local sound speed properties extracted from $\vk$.  It is defined as:
\begin{equation}
\label{eq:adaptive_conv}
\text{AdaConv}(\vk, \vx) = (\text{MLP}(\text{Filter}_k * \vk)) \odot (\text{Filter}_x * \vx),
\end{equation}
where $*$ denotes convolution and $\odot$ is element-wise multiplication. The term $\text{MLP}(\text{Filter}_k * \vk)$ acts as a spatial attention map, scaling the features of $\vx$ to model heterogeneous wave propagation effects. 
\end{itemize}

%To incorporate the spatially varying wavenumber field $\vk$ into the learnable operators $\mathcal{A}_h$ and $\mathcal{S}_h$, we employ a lightweight Adaptive Convolution Mechanism. This block modulates the features of a primary field $\vx$ (e.g., $\vu$ or $\vr$) based on local wave properties extracted from $\vk$. 

% Unlike dynamic convolutions that generate filter weights, this approach uses fixed filters and modulates feature maps, offering a parameter-efficient way to capture local physical interactions.
A single V-cycle update on grid $h$ proceeds as follows:
\begin{enumerate}
	\item \textbf{Pre-smoothing:} Compute the residual $\vr_h = \vf_h - \mathcal{A}_h (\vk_h,\vu_h)$ and update the solution $\vu_h \leftarrow \vu_h + \mathcal{S}_h(\vk_h, \vr_h)$.
	\item \textbf{Coarse-grid Correction:} Restrict the residual $\vr_{2h} = \mathcal{R} \vr_h$ and wavenumber $\vk_{2h}$ to the coarser grid. Recursively solve for the error correction $\ve_{2h}$.
	\item \textbf{Prolongation:} Interpolate the correction $\ve_h = \mathcal{P} \ve_{2h}$ and update the fine-grid solution $\vu_h \leftarrow \vu_h + \ve_h$.
	\item \textbf{Post-smoothing:} Apply the smoother $\mathcal{S}_h$ again to the updated solution to further reduce high-frequency errors.
\end{enumerate}
This recursive structure allows $\mathcal{G}_\theta$ to efficiently resolve errors across frequencies, with the adaptive components $\mathcal{A}_h$ and $\mathcal{S}_h$ capturing the local physics governed by $\vk$.

The overall self-composing construction and its USCT backbone are illustrated in Figure~\ref{fig:architecture}.

\begin{figure}
     \centering
     \includegraphics[width=.6\linewidth]{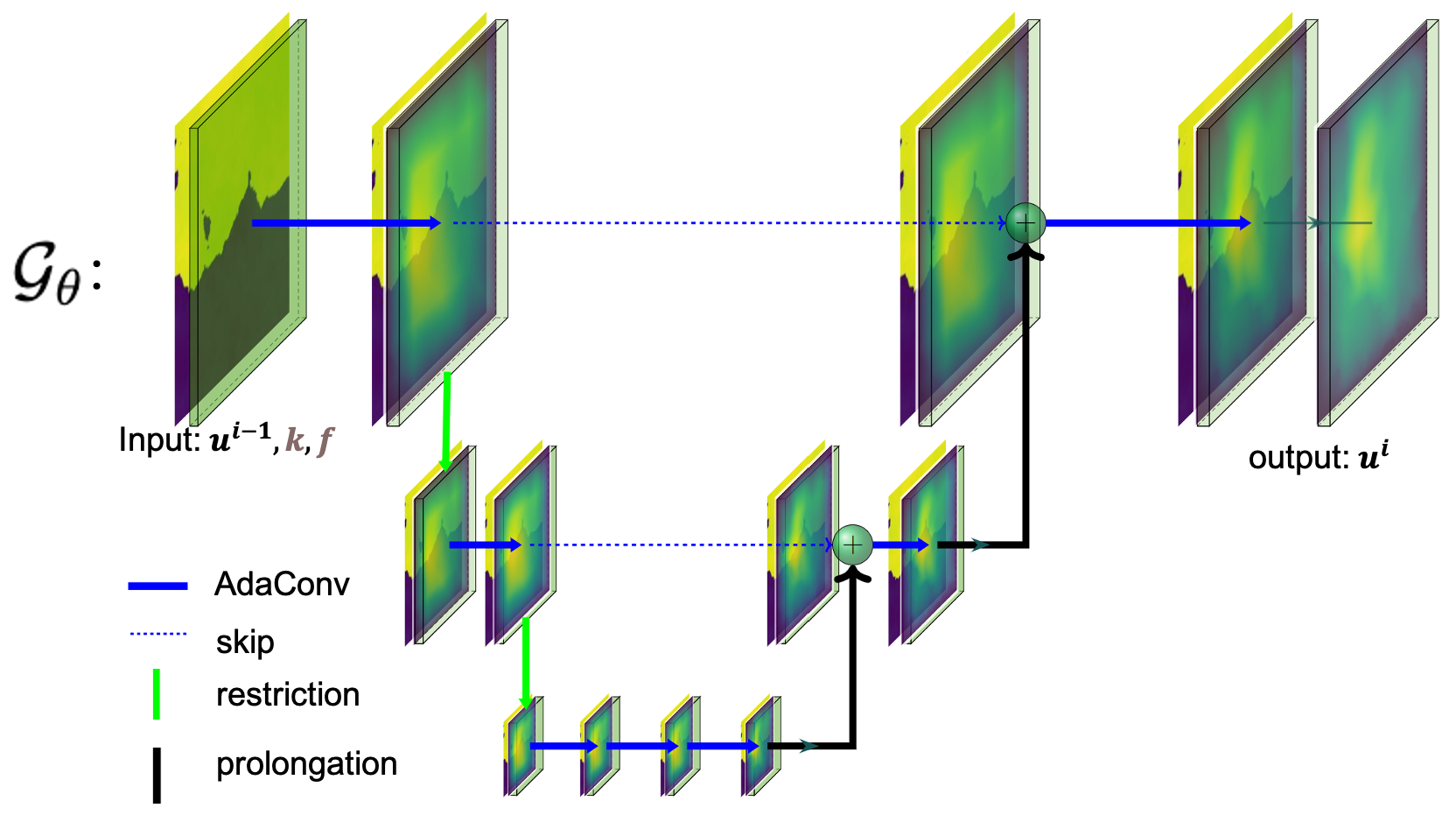}
     \caption{The diagram on the right illustrates the general architecture of the self-composing neural operator, expressed as $\mathcal P \circ (\mathcal G_\theta \circ)^n \circ \mathcal L$. On the left is the specific Multigrid-inspired backbone $\mathcal{G}_\theta$ designed for the Helmholtz application. 
	 A key feature is the Adaptive Convolution Mechanism (AdaConv), which distinguishes it from prior works such as \cite{he2019mgnet, he2023mgno}.}
     \label{fig:architecture}
\end{figure}

\subsection{Numerical experiments}\label{sec:numerics}

We evaluate MgNO-self on the USCT task, solving the Helmholtz problem \eqref{eq:helmholtz-strong} to map the wavenumber field $k$ and source $f$ to the wave field $u$. The model employs the multigrid-inspired backbone described in Section~\ref{sec:backbone_architecture}.

\begin{table}
    \scriptsize
    \centering
    \begin{tabular}{ccccccc}
    \hline
    \multirow{2}{*}{\textbf{Frequency (kHz)}} & \multicolumn{1}{c}{\multirow{2}{*}{\textbf{Metric}}}              & \multicolumn{5}{c}{\textbf{Models}}                                          \\ 
     & \multicolumn{1}{c}{}                                              & \textbf{UNet} & \textbf{FNO} &\textbf{AFNO}         & \textbf{BFNO}          & \textbf{MgNO-self}         \\ \hline
    \multirow{2}{*}{300}               & RRMSE     &  0.1236       &  0.0269      &  0.0165      & $ {0.0113}$   & \textbf{0.0028}       \\
   & Max Error      &  0.2551       &  0.0617      & $ {0.0293}$ &    0.0519              & \textbf{0.0092}       \\ 
   \hline
  \multirow{2}{*}{400}               & RRMSE     &  0.1503       &  0.0426      &  0.0242              &   $ {0.0148}$ & \textbf{0.0036}       \\
        & Max Error      &  0.3017       &  0.1172      & $ {0.0464}$ &    0.0840              & \textbf{0.0178}       \\ \hline
    \multirow{2}{*}{500}               & RRMSE       &  0.1798       &  0.0490      &  0.0276      &  $ {0.0209}$  & \textbf{0.0049}    \\
    & Max Error  &  0.3571  &  0.1432   & $ {0.0639}$ &    0.0838    & \textbf{0.0262}  \\ 
    \hline
    \multicolumn{2}{c}{\textbf{\# Parameters (M)}} & 36.0 & 734 & 58.6 & 104 & 26.6 \\
    \multicolumn{2}{c}{\textbf{Inference time (s)}} & 0.015 & 0.018 & 0.013 & 0.024 & 0.015 \\
      \hline
    \end{tabular}
    \caption{Quantitative evaluation on USCT forward simulation. Metrics: Relative Root Mean Square Error (RRMSE) and Maximum Error (Max Error). Models: UNet, FNO, AFNO\cite{guibas2021adaptive}, BFNO\cite{zhao2023deep}, and MgNO-self. Frequencies are 300, 400, and 500 kHz. Values are test-set means from the reported evaluation. The USCT task is also benchmarked in \cite{zeng2025openbreastus}, where MgNO-self is referred to as MgNO. The final two rows report model-size and single-forward-pass inference-time metadata for the corresponding forward surrogate models.}
    \label{tab:forward-baselines}
    \end{table}
    
    \begin{figure}[H]
        \centering
        % \subfigure{\includegraphics[width=0.45\textwidth]{field0.png}}
        \subfloat{\includegraphics[width=1\textwidth]{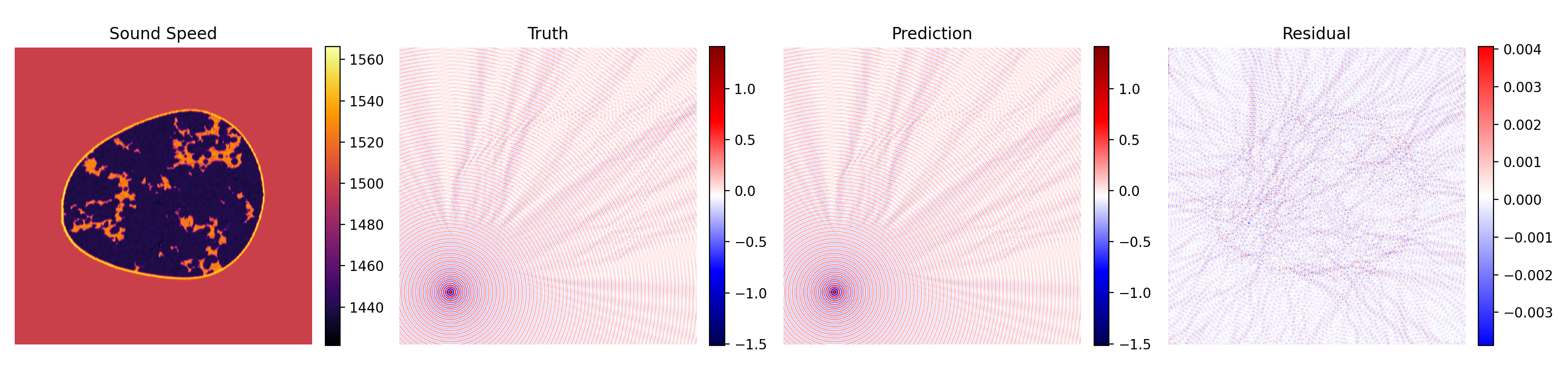}}\\[-0.5em]
        \subfloat{\includegraphics[width=1\textwidth]{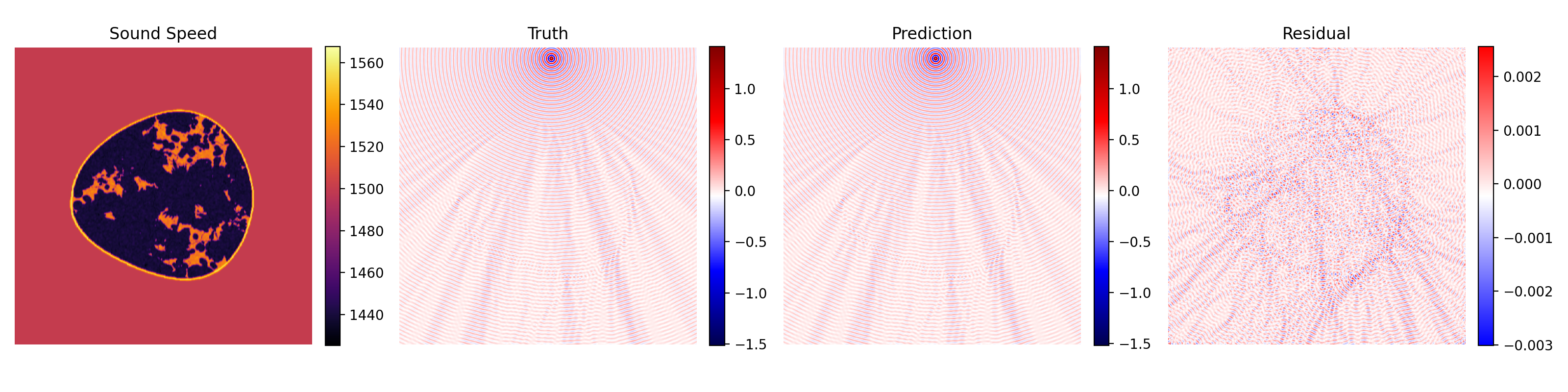}}\\
        % \subfloat{\includegraphics[width=0.45\textwidth]{field3.png}}\\
        % \subfloat{\includegraphics[width=0.45\textwidth]{field4.png}}\\
        % \subfloat{\includegraphics[width=0.45\textwidth]{field5.png}}\\
        % \subfloat{\includegraphics[width=0.45\textwidth]{field6.png}}
		\caption{Wave field solutions for different configurations. Each row represents a distinct test case with a specific sound speed distribution. Columns from left to right: input sound speed field $\vk$, ground-truth wavefield $\vu$, predicted wave field, and the  error $u_{\mathrm{pred}}-u$.} 
        \label{fig:forward-baselines}
    \end{figure}
To ensure reproducibility and establish a fair baseline comparison, all networks evaluated on the USCT task are trained under a consistent setup. The training set consists of 7,200 multiscale sound speed map and wave field pairs, and evaluation is performed on a held-out 900-sample test set at 300, 400, and 500 kHz. All models are trained for 30 epochs with the AdamW optimizer and batch size 10 on a single NVIDIA RTX A6000 GPU. Self-composed models use the Train-and-Unroll depth schedule, whereas the non-self baselines are trained directly at the target depth.

The parameter and inference-time rows in Table~\ref{tab:forward-baselines} provide cost metadata for the corresponding single-pass surrogate models. They should be read as surrogate-inference metadata, not as FLOP counts or as residual-driven solve times; a full hardware-normalized FLOP and wall-clock study remains separate from the present accuracy-focused comparison.

The errors in Table~\ref{tab:forward-baselines} are surrogate approximation errors, not residual tolerances from an online linear solve. If the discretized Helmholtz operator and residual are available during inference, shifted-Laplacian multigrid or shifted-Laplacian-preconditioned Krylov methods remain the appropriate classical solver-side references for high-accuracy single or few-solve regimes. Neural preconditioners that use learned blocks inside such residual-driven iterations are also an important residual-driven neural-preconditioning direction, and a full treatment of that setting is outside the scope of this surrogate-focused manuscript.

% \begin{figure}[H]
% 	\centering
%         % \subfigure{\includegraphics[width=0.45\textwidth]{field0.png}}
%         \subfloat{\includegraphics[width=1\textwidth]{test_case_00212.png}}\\[-0.5em]
%         \subfloat{\includegraphics[width=1\textwidth]{test_case_00000.png}}\\
%         % \subfloat{\includegraphics[width=0.45\textwidth]{field3.png}}\\
%         % \subfloat{\includegraphics[width=0.45\textwidth]{field4.png}}\\
%         % \subfloat{\includegraphics[width=0.45\textwidth]{field5.png}}\\
%         % \subfloat{\includegraphics[width=0.45\textwidth]{field6.png}}
% 		\caption{Wave field solutions for different configurations. Each row represents a distinct test case with a specific sound speed distribution. Columns from left to right: input sound speed field $\vk$, ground-truth wavefield $\vu$, predicted wave field, and the error $u_{\mathrm{pred}}-u$.} 
%         \label{fig:fields}
%     \end{figure}

	Qualitative results are presented in Figure~\ref{fig:forward-baselines}, which illustrates the capability of the MgNO-self model to accurately predict wave field solutions for different input configurations. Each row in the figure typically displays the input sound speed field (related to the wave number $k$), the groundtruth wave field solution $\vu$ for a specific source term $\vf$, and the corresponding wave field solution predicted by MgNO-self. The visualizations demonstrate that the model effectively captures complex wave phenomena, including scattering effects due to inhomogeneous sound speed distributions, and accurately reproduces the wave field structure.

To isolate the effect of self-composition more directly, we additionally ran the small-scale 400 kHz ablation summarized in Table~\ref{tab:usct-small-ablation} on a 25\% USCT subset. This revision subset contains 1,790 training speed samples and 100 validation speed samples and uses a 30-epoch budget. We retain the comparison at this reduced scale because a fully crossed multi-backbone study on the full USCT set is computationally expensive, while the subset still provides a meaningful controlled comparison.

\begin{table}[H]
	\centering
	\scriptsize
	\begin{tabular}{llc}
		\toprule
		\textbf{Backbone} & \textbf{Variant} & \textbf{Val. RRMSE} \\
		\midrule
		FNO & regular & 0.0902 \\
		FNO & self & 0.0966 \\
		UNet & regular & 0.2559 \\
		UNet & self & 0.2610 \\
		MgNO & regular & 0.0194 \\
		MgNO & self & 0.0151 \\
		\bottomrule
	\end{tabular}
	\caption{Small-scale 400 kHz USCT ablation on the 25\% revision subset. The subset contains 1,790 training speed samples and 100 validation speed samples, and all runs are trained for 30 epochs.}
	\label{tab:usct-small-ablation}
\end{table}

Table~\ref{tab:forward-baselines} shows that the MgNO-based models achieve markedly lower RRMSE and Max Error than the generic UNet, FNO, AFNO, and BFNO baselines across 300--500 kHz, with about 9--13$\times$ RRMSE reductions relative to the spectral baselines. Table~\ref{tab:usct-small-ablation} then separates the backbone effect from the self-composition effect on the reduced 400 kHz USCT subset. For FNO and UNet, the regular and self-composed variants remain close, with the regular models slightly better in this short run (0.0902 versus 0.0966 for FNO and 0.2559 versus 0.2610 for UNet). The clearest gain appears in the multigrid-inspired backbone, where MgNO-self reaches 0.0151 compared with 0.0194 for regular MgNO.

% Figure~\ref{fig:three-method-ablation} provides the corresponding guarded MgNO optimization traces on the real-data USCT ablation subset. The untied baseline uses the same effective forward depth and the same total allocated MG block count as the self model, so the architectural difference is limited to shared versus untied parameters. Consistent with Table~\ref{tab:usct-small-ablation}, the self-composed MgNO configuration achieves the best optimization outcome, while the direct-training baselines remain close to one another once matched at the same depth and block budget.

% \begin{figure}[H]
% 	\centering
% 	\includegraphics[width=0.82\textwidth]{three_method_training_strategy_comparison_guarded_lr3e4.png}
% 	\caption{ Three-way training-loss comparison on the real-data USCT ablation subset for MgNO-self + Train-and-Unroll, MgNO-self + direct training, and MgNO untied + direct training. Faint traces show raw step losses and bold curves show rolling averages. The untied baseline is matched to the self model in effective forward depth and total allocated MG block count. The completed run shows that Train-and-Unroll yields the best optimization trajectory and lowest final loss, whereas the tied and untied direct-training baselines remain close once matched at the same depth and block budget.}
% 	\label{fig:three-method-ablation}
% \end{figure}

\subsection{OT Car-CFD benchmark}\label{sec:carcfd_ot}

We additionally evaluated the OTNO family on the full Car-CFD OT surface dataset as a transfer test beyond the grid-based USCT setting. The input is a seven-channel feature field on an $84\times84$ OT latent lattice. OTNO follows the optimal-transport geometry encoding of Li et al.~\cite{li2025geometricot}: the surface is embedded on the OT lattice, operator updates are applied there, and a weighted sparse decoder maps the latent field back to the irregular surface mesh. In the OTMgNO variant, we keep the same OT lifting and weighted sparse decoder but replace the spectral OTNO blocks with MgNO-style multilevel blocks. The underlying Car-CFD mesh benchmark follows the aerodynamic-design dataset of Umetani and Bickel~\cite{umetani2018learning}. The target is the pressure field on 3,586 irregular surface points over 889 samples (778 training, 111 test). This remains a 3D surface-transfer task rather than a full volumetric solve, so we include it as supplementary evidence that the framework can extend beyond regular 2D grids.

\begin{table}[H]
	\centering
	\scriptsize
	\begin{tabular}{llc}
		\toprule
		\textbf{Backbone} & \textbf{Variant} & \textbf{Best val. relL2} \\
		\midrule
		OTNO & regular & 0.087868 \\
		OTNO & self & 0.089179 \\
		OTMgNO & regular & 0.076814 \\
		OTMgNO  & self & 0.078645 \\
		\bottomrule
	\end{tabular}
	\caption{Summary of the supplementary full Car-CFD OT benchmark. All runs use the same 778/111 train-test split of the 889-sample dataset. For each backbone, ``regular'' denotes the best untied configuration and ``self'' denotes the best shared configuration; for OTMgNO, the best shared result comes from a late-shared partial-tied regime.}
	\label{tab:carcfd_ot}
\end{table}

Table~\ref{tab:carcfd_ot} shows the main Car-CFD takeaway relevant to this manuscript. On the plain OTNO backbone, the best regular and self results remain close (0.087868 versus 0.089179), suggesting that sharing acts mainly as a mild regularizer. On the stronger OTMgNO backbone, both variants improve substantially, and the best self configuration remains competitive (0.078645) although the best regular configuration is still strongest (0.076814). Thus, the supplementary Car-CFD test supports the same measured conclusion as the USCT ablation: the framework extends beyond regular 2D grids, and the self-composition remains competitive while the gain is training efficiency.

\section{Conclusions and future work}\label{sec:conclusions}

We introduced self-composing neural operators as a compact way to build deeper PDE surrogates by repeatedly applying a shared backbone block. The construction is motivated by iterative numerical solvers, but it remains an amortized operator-learning method: after training, it maps problem data directly to the solution without running an online residual-driven solve. We established universal approximation and depth-dependent error-reduction results, and the experiments show that self-composition can improve parameter efficiency and accuracy on the tested Darcy and Helmholtz benchmarks. The Train-and-Unroll curriculum provides a practical way to train the repeated architecture by growing the active depth gradually.

The numerical evidence also suggests a measured interpretation. On USCT, the multigrid-inspired self-composed model substantially improves over the tested FNO-type baselines, while the supplementary Car-CFD study shows that the benefit of parameter sharing and Train-and-Unroll is backbone- and regime-dependent. The method should therefore be viewed as a useful architecture principle for many-query surrogate settings, not as a replacement for classical solvers or residual-driven neural preconditioners when only a few high-accuracy solves are needed. Important next steps include stronger high-contrast and out-of-distribution tests, closer integration with residual-driven neural preconditioning, and memory-control strategies such as activation checkpointing for deeper three-dimensional unrolls.

\section*{Acknowledgments}

J.H. was supported in part by the National Natural Science Foundation of China (Grant No. 12501606) and the Shandong Provincial Natural Science Foundation (Grant No. 2026HWYQ-024). X.L. was supported by the National Natural Science Foundation of China (Grant No. 12401571). J.X. was supported by King Abdullah University of Science and Technology (KAUST) Baseline Research Fund. The authors thank the DeepModeling community for hosting the ``AI for Science'' competition on the Bohrium platform. We also thank He Sun of Peking University for supporting the OpenBreastUS dataset used in this work.

\bibliographystyle{elsarticle-harv}
\bibliography{references,ref}
\end{document}